\documentclass[]{article}
\hyphenation{analysis onemax Doerr parameter leadingones Hoeffding develop-ment Krejca Carola oneminmax leadingones-trailingzeroes countingones-countingzeroes NSGA IJCAI}

\usepackage{amsmath}
\usepackage{amsfonts}
\usepackage{amssymb}
\usepackage{amsthm}
\usepackage{mathrsfs}
\usepackage{enumerate}
\usepackage[hidelinks]{hyperref} 
\usepackage{xspace}
\usepackage{xcolor} 

\allowdisplaybreaks[2]
\clubpenalty=10000
\widowpenalty=10000
\frenchspacing 

\newtheorem{theorem}{Theorem}
\newtheorem{lemma}[theorem]{Lemma}

\newcommand{\eps}{\varepsilon}


\newcommand{\needle}{\textsc{Needle}\xspace}

\newcommand{\majority}{\mbox{$\textsc{Majority}_r$}\xspace}

\newcommand{\pim}{\ensuremath{p_i^-}\xspace}
\newcommand{\pip}{\ensuremath{p_i^+}\xspace}
\newcommand{\Tip}{\ensuremath{T_i^+}\xspace}

\newcommand{\R}{\ensuremath{\mathbb{R}}}

\newcommand{\N}{\ensuremath{\mathbb{N}}} 

\DeclareMathOperator{\expectation}{E}

\title{The Runtime of Random Local Search on the Generalized Needle Problem}
\author{Benjamin Doerr, Andrew James Kelley}

\begin{document}
	{\sloppy
		
		\maketitle
		
		\begin{abstract}
			In their recent work, C.~Doerr and Krejca (Transactions on Evolutionary Computation, 2023) proved upper bounds on the expected runtime of the randomized local search heuristic on generalized \needle functions. Based on these upper bounds, they deduce in a not fully rigorous manner a drastic influence of the needle radius~$k$ on the runtime.
			
			In this short article, we add the missing lower bound necessary to determine the influence of parameter~$k$ on the runtime. To this aim, we derive an exact description of the expected runtime, which also significantly improves the upper bound given by C.~Doerr and Krejca. We also describe asymptotic estimates of the expected runtime.
		\end{abstract}
		
		\section{Introduction}
		
		As part of a larger effort to design and analyze benchmarks with tunable features, C.~Doerr and Krejca~\cite{DoerrK23majority} propose a generalized \needle benchmark and analyze the runtime of the \emph{randomized local search} heuristic on this benchmark. 
		
		Defined on bit strings of length~$n$, the \needle problem with needle radius $k \in [0..n] := \{0, \dots, n\}$ consists of two plateaus of constant fitness. The global optimum consists of all bit strings having at most $k$ zeros. All other search points have the same lower fitness of zero. Hence, optimizing this benchmark  means finding a search point with at most $k$ zeros. Since the non-optimal solutions form a large plateau of constant fitness, this can still be difficult. The generalized \needle benchmark integrates into several attempts to understand how randomized search heuristics cope with plateaus of constant fitness such as studies on royal road functions~\cite{mitchell92royal,WegenerW05,DoerrSW13foga,DoerrK13cec}, the plateau problem~\cite{AntipovD21telo}, and the BlockLeadingOnes problem~\cite{DoerrK23gecco}.
		
		The main result~\cite[Theorem~7, but see further below for a discussion of the different notation]{DoerrK23majority} is that the \emph{randomized local search} heuristic (RLS) optimizes this benchmark, for $n$ even and sufficiently large, in an expected time of at most 
		\begin{equation}\label{eq:doerrk}
		6 r \, \frac{\lambda^r-1}{\lambda-1} + \tfrac 12 n(1+\ln r)
		\end{equation}
		iterations, where $r := \frac n2 - k$ and $\lambda = 1 + \frac{2n + 12r(r-1)}{3rn - 2n -6r(r-1)}$. 
		
		For this expression, the following asymptotic estimates were given. For $k \le (\frac 12 - \varepsilon)n$, $\varepsilon$ a positive constant, the expected runtime is at most ${(k+1) \exp(O(n^2 / (k+1))}$. When $k$ is so close to $\frac n2$ that the parameterization $r = \frac n2 - k$ satisfies $r = o(n)$, then for $r = \Omega(\sqrt{n \log\log n})$ the bound becomes $n \exp(r^2 / n)$. For all smaller~$r$, the runtime bound is $O(n \log r)$. From these estimates, a ``drastic change in the expected runtime'' (see the text following Theorem~7) is deduced in~\cite{DoerrK23majority}. While this is an intuitive claim, without lower bounds, of course, nothing can be said except that drastically changing upper bounds were proven.
		
		In this work, we provide the missing lower bounds by actually proving a precise expression for the runtime. If the initial search point has $i$ ones, then the expected runtime of RLS on the generalized \needle benchmark with parameter~$k$ is equal to
		\[\sum_{j=i}^{n-k-1} \binom{n}{\le j} \Big/ \binom{n-1}{j}.\]
		This result holds for all $n$, that is, the assumption that $n$ is even and sufficiently large is not required. Our proof is based on a classic Markov chain argument of Droste, Jansen, and Wegener~\cite{DrosteJW00}, and appears considerably simpler than the proof in~\cite{DoerrK23majority}, which uses various forms of drift, the optional stopping time theorem, and a generalized version of Wald's equation.
		
		We use this expression to give asymptotics of the expected runtime on $\needle_k$ in Theorems~\ref{thm:asymptotic_for_small_k}, \ref{thm:linear_k}, \ref{thm:bounds_for_k_near_half_n}, and \ref{thm:large_enough_k}. These results very roughly confirm the runtime behavior predicted (without proven lower bounds) in~\cite{DoerrK23majority},
		however, often our (asymptotically tight) estimates are considerably smaller than the upper bound given in~\cite{DoerrK23majority}. For example, for $k = o(n)$, which might be the most natural parameter range as only here the number of optima is sub-exponential, we show that the expected runtime is asymptotic to $2^n \binom{n}{k}^{-1}$, whereas the upper bound~\eqref{eq:doerrk} proven in~\cite{DoerrK23majority} in this case is superexponential. 
		
		We note that in~\cite{DoerrK23majority} a symmetric version of the \needle problem was also studied (called \textsc{HasMajority}). Here, each search point with at most $k$ ones {or} zeros is optimal. We note that our upper bounds also apply to this problem (naturally), and again significantly improve the results in~\cite{DoerrK23majority}. From the close similarity of the two variants of the problem, we are optimistic that our general method would allow one to determine the precise expected runtime of randomized local search on the symmetric \needle problem. Indeed, if $x_t \in \{0, 1\}^{n}$ is the individual at time $t$, one need only consider the Markov chain $X_t = \max\{\|x_t\|_1, n-\|x_t\|_1\}$. Since we do not expect much novelty from extending our analysis to the symmetric \needle problem, we do not further study this problem. 
		
		\section{Generalized Needle Functions}
		
		For $n \in \N$ and $k \in [0..n]$, let $\needle_{n,k}$ be the objective function defined by 
		\[\needle_{n,k}(x) =
		\begin{cases}
		0, &\text{if } \|x\|_1 < n-k,\\
		1, &\text{if } \|x\|_1 \ge n - k,
		\end{cases}\]
		for all $x \in \{0,1\}^n$. In other words, the global optimum of $\needle_{n,k}$ consists of the all-ones string and all bit strings that differ from it in at most $k$ bits. These solutions have an objective value of one. All other search points have the equal (and inferior) objective value of zero. This function class contains as special case $k=0$ the classic \needle benchmark (also called needle-in-a-haystack). 
		
		We call $n$ the \emph{problem size} and $k$ the \emph{needle radius}. As usual in the mathematical runtime analysis of randomized search heuristics~\cite{NeumannW10,AugerD11,Jansen13,ZhouYQ19,
			DoerrN20}, we often suppress the problem size~$n$ from our notation and write, e.g., just $\needle_k$. Also, we will often be interested in the asymptotic behavior of the runtime with respect to the problem size~$n$. In such cases, we use the common Landau symbols (``big-Oh notation''), always with respect to $n$ tending to infinity. We allow that the needle radius~$k$ be a function of the problem size~$n$, but again suppress this dependence from the notation, that is, write $k = \sqrt n$ instead of $k(n) = \sqrt n$. 
		
		This generalized \needle problem was introduced in~\cite{DoerrK23majority} (for even $n$ and $k \in [0..\frac n2]$) under the name \emph{generalized majority problem}. Since this name hides the relation with the classic \needle problem and since a different benchmark named \emph{majority} was defined~\cite{GoldbergO98} and studied~\cite{DurrettNO11,Neumann12,NguyenUW13,KotzingLLM20,DoerrKLL20} earlier, we suggest to use the name \emph{generalized \needle problem}. To ease the comparison with~\cite{DoerrK23majority}, we note that \[\needle_{k} = \majority\] 
		with $r=\frac n2 - k$ for all $k \in [0..\frac 12n]$.

		\section{Exact Runtimes}
		
		In this section, we prove our result that determines the exact runtime of randomized local search on the generalized \needle problem.
		
		\begin{theorem}\label{thm:exact_expectation}
			Let $n \in \N$ and $k \in [0..n]$. Let $i \in [0..n]$. Let $T(i)$ be the runtime of RLS on $\needle_{n,k}$ when starting with an initial solution having exactly $i$ ones. Then 
			\[
			\expectation[T(i)] = \sum_{j=i}^{n-k-1} \binom{n}{\leq j} \Big/ \binom{n-1}{j}
			\]
			for $i \le n-k-1$ and $\expectation[T(i)]=0$ for $i \in [n-k..n]$.
			
			Let $T$ be the runtime when starting with a random solution. Then
			\[
			\expectation[T] 
			= \sum_{i=0}^{n-k-1} \left[\binom{n}{i} 2^{-n} \sum_{j=i}^{n-k-1} \binom{n}{\leq j} \Big/ \binom{n-1}{j}  \right]. 
			\]
		\end{theorem}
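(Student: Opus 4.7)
The plan is to reduce the analysis to a one-dimensional birth-death chain and apply the classical hitting-time technique of Droste, Jansen, and Wegener.

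First I would observe that on $\needle_{n,k}$, which takes only the two values $0$ and $1$, RLS always accepts the offspring generated by flipping a single uniformly chosen bit, until it lands in the optimal set $\{x : \|x\|_1 \ge n-k\}$. Hence, until absorption, the one-dimensional process $X_t := \|x_t\|_1$ is a birth-death Markov chain on $[0..n]$ with transition probabilities $\Pr[X_{t+1} = j-1 \mid X_t = j] = j/n$ and $\Pr[X_{t+1} = j+1 \mid X_t = j] = (n-j)/n$. The expected runtime $\expectation[T(i)]$ is therefore the expected hitting time of $\{n-k,\dots,n\}$ started from state $i$, and this is zero for $i \ge n-k$.

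Next I would introduce $\tau_j$, the expected time to first enter state $j+1$ (or higher) when starting from state $j$, for $j \in [0..n-k-1]$, so that $\expectation[T(i)] = \sum_{j=i}^{n-k-1} \tau_j$ by the strong Markov property. A one-step analysis gives the recurrence
\[
\tau_j = 1 + \tfrac{j}{n}(\tau_{j-1} + \tau_j), \qquad \tau_0 = 1,
\]
which simplifies to $(n-j)\tau_j - j\tau_{j-1} = n$. The main technical step is to solve this recurrence in closed form. I would use the integrating factor $\binom{n-1}{j}/(n-j)$: multiplying through and using the identity $\binom{n-1}{j}\cdot n/(n-j) = \binom{n}{j}$, the recurrence telescopes into
\[
\binom{n-1}{j}\tau_j - \binom{n-1}{j-1}\tau_{j-1} = \binom{n}{j}.
\]
Summing from $1$ to $j$ and using $\binom{n-1}{0}\tau_0 = 1 = \binom{n}{0}$ yields $\binom{n-1}{j}\tau_j = \binom{n}{\le j}$, hence $\tau_j = \binom{n}{\le j}/\binom{n-1}{j}$. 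Substituting into $\expectation[T(i)] = \sum_{j=i}^{n-k-1} \tau_j$ gives the claimed identity.

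For the second formula I would condition on the Hamming weight of the initial search point. Since a uniform random bit string has $i$ ones with probability $\binom{n}{i}2^{-n}$, the law of total expectation gives
\[
\expectation[T] = \sum_{i=0}^{n} \binom{n}{i} 2^{-n} \expectation[T(i)],
\]
and the terms with $i \ge n-k$ vanish because $\expectation[T(i)] = 0$ there, leaving exactly the stated sum. The only nontrivial step is the closed-form solution of the recurrence, and I expect guessing the correct integrating factor $\binom{n-1}{j}$ (equivalently, recognizing the underlying structure of sampling without replacement) to be the main point requiring care; everything else is a routine application of the strong Markov property and conditioning.
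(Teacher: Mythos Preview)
Your proposal is correct and follows the same overall architecture as the paper: reduce to the one-dimensional birth--death chain on $[0..n]$, compute the expected level-crossing times $\tau_j = \expectation[T_j^+]$, sum them to get $\expectation[T(i)]$, and finally average over a binomial initial weight.

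The only difference is in how $\tau_j$ is obtained. The paper quotes the sum--product formula of Droste, Jansen, and Wegener (Lemma~\ref{lem:droste}) and then simplifies the resulting expression $\sum_{k=0}^j \frac{1}{p_k^+}\prod_{\ell=k+1}^j p_\ell^-/p_\ell^+$ algebraically into $\binom{n}{\le j}/\binom{n-1}{j}$ (Lemma~\ref{lem:droste_simplified}). You instead derive the first-step recurrence $(n-j)\tau_j - j\tau_{j-1} = n$ directly and solve it by the telescoping identity $\binom{n-1}{j}\tau_j - \binom{n-1}{j-1}\tau_{j-1} = \binom{n}{j}$. Your route is marginally more self-contained (no external lemma needed) and makes the role of the binomial identity $\frac{n}{n-j}\binom{n-1}{j}=\binom{n}{j}$ transparent; the paper's route has the advantage of plugging into a general formula that applies to any birth--death chain. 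Both computations are of the same length and difficulty, so this is a stylistic rather than substantive divergence.
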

		
		When running randomized local search on a generalized \needle problem, in each iteration until an optimum is found, the new solution is accepted. Consequently, in this phase, randomized local search performs a classic unbiased random walk on the hypercube $\{0,1\}^n$. This changes once an optimum is reached, but since we are only interested in the first hitting time of an optimal solution, for our analysis it suffices to regard an unbiased random walk on the hypercube. 
		
		It is easy to see that for our runtime question, it suffices to regard the reduced Markov chain on $[0..n]$ that only regards the number of ones in the current state of the random walk on $\{0,1\}^n$. This argument has been used already in the first runtime analysis of the \needle problem~\cite{GarnierKS99}, the formal reason behind this intuitive argument is that both a run of randomized local search on a generalized \needle problem and the set of optima of this problem are invariant under coordinate permutations (see~\cite{Doerr21symmetry} for a more formal discussion of this argument). 
		
		We thus now analyse this Markov chain on $[0..n]$. We note that for $i \in [1..n]$, the transition probability to go from $i$ to $i-1$, is
		\[
		\pim := \tfrac in.
		\]
		For $i \in [0..n-1]$, the transition probability to go from $i$ to $i+1$ is
		\[
		\pip:=\tfrac{n-i}{n}.
		\]
		
		For such Markov chains, that is, Markov chains defined in an interval of the integers that move at most to neighboring states, hitting times can be computed exactly from the transition probabilities. The following formula is part of Corollary 5 from \cite{DrosteJW00}.
		\begin{lemma}
			\label{lem:droste}
			Let \Tip denote the (first) hitting time to reach state $i+1$, given that the Markov chain starts in state $i$. Then
			\[
			\expectation[\Tip] = \sum_{k=0}^i \frac{1}{p_k^+} \prod_{\ell = k+1}^i \frac{p_\ell^-}{p_\ell^+}.
			\]
		\end{lemma}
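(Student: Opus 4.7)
The plan is a standard first-step analysis for one-dimensional nearest-neighbor Markov chains. I would let $h_j := \expectation[T_j^+]$ for each $j \in [0..i]$, so that the target quantity is $h_i$, and first establish a linear recurrence for $h_j$ before unrolling it to get the claimed closed form.

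Conditioning on the first transition from state $j \ge 1$ and invoking the strong Markov property, I would write
\[
h_j = p_j^+ \cdot 1 + p_j^- (1 + h_{j-1} + h_j) + (1 - p_j^+ - p_j^-)(1 + h_j),
\]
because after stepping down to $j-1$ the chain needs an independent copy of $T_{j-1}^+$ to return to $j$ and then a further independent copy of $T_j^+$ to advance, and after a self-loop it still has to accomplish $T_j^+$. Simplifying gives $p_j^+ h_j = 1 + p_j^- h_{j-1}$, that is,
\[
h_j = \frac{1}{p_j^+} + \frac{p_j^-}{p_j^+}\, h_{j-1} \qquad (j \ge 1),
\]
with base case $h_0 = 1/p_0^+$, since from state $0$ the only possibilities are an upward step or a self-loop.

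I would then unroll this one-term linear recurrence by induction on $j$. Assuming the claimed formula holds at level $j-1$, substituting it into the recursion and pulling $p_j^-/p_j^+$ inside the sum gives
\[
h_j = \frac{1}{p_j^+} + \frac{p_j^-}{p_j^+} \sum_{k=0}^{j-1} \frac{1}{p_k^+} \prod_{\ell=k+1}^{j-1} \frac{p_\ell^-}{p_\ell^+} = \sum_{k=0}^{j} \frac{1}{p_k^+} \prod_{\ell=k+1}^{j} \frac{p_\ell^-}{p_\ell^+},
\]
where the $k=j$ summand is $1/p_j^+$ (empty product). Taking $j=i$ yields exactly the formula in the lemma.

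The only real concern is that the rearrangement $p_j^+ h_j = 1 + p_j^- h_{j-1}$ tacitly assumes $h_j < \infty$, but this is immediate from $p_j^+ > 0$ at each relevant state, which gives $T_j^+$ a geometric tail and hence finite expectation. Because this lemma is quoted from Droste, Jansen, and Wegener~\cite{DrosteJW00}, I would keep the write-up short in the paper; the first-step analysis above is the standard textbook derivation and I do not anticipate any genuine obstacle.
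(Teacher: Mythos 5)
Your derivation is correct. Note, however, that the paper does not prove this lemma at all: it is quoted verbatim as part of Corollary~5 of Droste, Jansen, and Wegener~\cite{DrosteJW00}, so there is no in-paper proof to compare against. Your first-step analysis is the standard route to this identity: conditioning on the first transition gives $p_j^+ h_j = 1 + p_j^- h_{j-1}$ (using that a nearest-neighbor chain must pass through $j$ on its way from $j-1$ to $j+1$, so the strong Markov property lets you split the excursion into independent copies of $T_{j-1}^+$ and $T_j^+$), the base case $h_0 = 1/p_0^+$ is right, and unrolling the recurrence by induction reproduces the claimed sum, with the $k=j$ summand being the empty product. Your handling of the finiteness issue is also adequate: since the chain lives on the finite set $[0..n]$ and $p_\ell^+ > 0$ for all $\ell \le i$, every block of $i+1$ steps reaches state $i+1$ with probability at least $\prod_{\ell \le i} p_\ell^+ > 0$, so $T_i^+$ has a geometric tail and finite expectation, which justifies subtracting $(1-p_j^+)h_j$ from both sides.
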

		
		We shall occasionally need the statement that $\expectation[T_i^+]$ is increasing in~$i$. Given that progress is harder when closer to the optimum, this statement is very natural. The proof, while not difficult, is slightly less obvious.
		
		\begin{lemma}
			\label{lem:Tip_is_increasing}
			Let \Tip denote the (first) hitting time to reach state $i+1$, given that the Markov chain starts in state $i$. Then $\expectation[\Tip]$ is increasing in $i$ for $i \in [0..n]$.
		\end{lemma}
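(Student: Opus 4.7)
The plan is to combine the first-step recurrence for $\expectation[\Tip]$ with a short inductive upper bound. Starting from Lemma~\ref{lem:droste} (or equivalently by conditioning on the first step out of state $i+1$) one obtains
\[
\expectation[T_{i+1}^+] = \frac{1}{p_{i+1}^+} + \frac{p_{i+1}^-}{p_{i+1}^+}\,\expectation[T_i^+],
\]
and hence
\[
\expectation[T_{i+1}^+] - \expectation[T_i^+] = \frac{1 - (p_{i+1}^+ - p_{i+1}^-)\,\expectation[T_i^+]}{p_{i+1}^+}.
\]
So the whole task reduces to showing that the numerator on the right is nonnegative for every $i \in [0..n-2]$.

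I would split on the sign of $p_{i+1}^+ - p_{i+1}^-$. When $i+1 \ge n/2$, the definitions $p_j^+=(n-j)/n$ and $p_j^-=j/n$ give $p_{i+1}^+ - p_{i+1}^- \le 0$, so the numerator is at least $1$ and monotonicity is immediate (with a gain of at least $1/p_{i+1}^+$). In the complementary case $i+1 < n/2$, we have $p_{i+1}^+ - p_{i+1}^- = (n-2i-2)/n > 0$, and it suffices to establish the auxiliary bound
\[
\expectation[T_i^+] \le \frac{n}{n-2i-2}.
\]

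The auxiliary bound is proved by a short induction on $i$. The base case $i=0$ is $\expectation[T_0^+] = 1/p_0^+ = 1 \le n/(n-2)$. For the inductive step, substituting the hypothesis into the recurrence and factoring out $n/(n-i-1)$ collapses everything to
\[
\expectation[T_{i+1}^+] \le \frac{n}{n-i-1} + \frac{i+1}{n-i-1}\cdot \frac{n}{n-2i-2} = \frac{n}{n-i-1}\cdot\frac{n-i-1}{n-2i-2} = \frac{n}{n-2i-2},
\]
which is strictly smaller than the bound $n/(n-2(i+1)-2)$ required at the next step, so the induction closes. The main obstacle is really just keeping track of the sign change of $p_{i+1}^+ - p_{i+1}^-$ at $i+1 = n/2$ (including the parity of $n$) so that the two cases meet at the boundary; the algebra in the induction is routine once the common factor $n-i-1$ is spotted.
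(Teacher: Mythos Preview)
Your argument is correct, but it takes a different route from the paper's. The paper proceeds directly from the explicit sum in Lemma~\ref{lem:droste}: it observes that the $k$-th summand of $\expectation[\Tip]$ is dominated by the $(k+1)$-st summand of $\expectation[T_{i+1}^+]$, which follows immediately from the monotonicity of $p_j^+$ (decreasing) and $p_j^-$ (increasing); the leftover $0$-th term of $\expectation[T_{i+1}^+]$ then gives the strict increase. This term-matching works for any birth--death chain with monotone transition probabilities and needs no case distinction or induction.

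Your approach instead rewrites the problem via the first-step recurrence $\expectation[T_{i+1}^+] = 1/p_{i+1}^+ + (p_{i+1}^-/p_{i+1}^+)\,\expectation[T_i^+]$, reduces monotonicity to the sign of $1-(p_{i+1}^+-p_{i+1}^-)\expectation[\Tip]$, and handles the nontrivial ``positive drift'' regime $i+1<n/2$ by proving the auxiliary bound $\expectation[\Tip]\le n/(n-2i-2)$ inductively. The algebra in the inductive step is clean (the factor $n-i-1$ indeed cancels), and the case split at $i+1=n/2$ covers both parities of~$n$. The trade-off is that your proof is tied to the specific values $p_j^\pm = (n-j)/n,\, j/n$, whereas the paper's argument is structural; on the other hand, your method yields the explicit upper bound $\expectation[\Tip]\le n/(n-2i-2)$ for $i<n/2-1$ as a by-product, which is information the paper's proof does not produce.
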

		\begin{proof}
			Let $0\leq i < n$. 
			Lemma~\ref{lem:droste} gives
			\[
			\expectation[\Tip] = \sum_{k=0}^i \frac{1}{p_k^+} \prod_{\ell = k+1}^i \frac{p_\ell^-}{p_\ell^+},
			\]
			and 
			\[
			\expectation[T_{i+1}^+] = \sum_{k=0}^{i+1} \frac{1}{p_k^+} \prod_{\ell = k+1}^{i+1} \frac{p_\ell^-}{p_\ell^+}.
			\]
			We will be done once we show that the $k$-th term in the sum for $\expectation[\Tip]$ is less than the $(k+1)$-st term of $\expectation[T_{i+1}^+]$. In other words, we want
			\[
			\frac{1}{p_k^+} \prod_{\ell = k+1}^i \frac{p_\ell^-}{p_\ell^+} < \frac{1}{p_{k+1}^+} \prod_{\ell = k+2}^{i+1} \frac{p_\ell^-}{p_\ell^+}.
			\]
			This follows from the facts that $p_j^+$ is decreasing and $p_j^-$ is increasing in~$j$.
		\end{proof}
		
		For our Markov chain, this result gives the following simple expression for the expected time to increase the current state.
		\begin{lemma}
			\label{lem:droste_simplified}
			Consider still the Markov chain that counts the number of ones of the state of a random walk on the hypercube $\{0,1\}^n$. Let \Tip denote the (first) hitting time to reach state $i+1$, given that the Markov chain starts in state $i$.  We have
			\[
			\expectation[\Tip] = \binom{n}{\leq i} \Big/ \binom{n-1}{i}
			\]
			for all $i \in [0..n-1]$.
		\end{lemma}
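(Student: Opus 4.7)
The plan is a direct computation starting from Lemma~\ref{lem:droste}. First I would substitute the explicit transition probabilities $p_k^+ = (n-k)/n$ and $p_\ell^-/p_\ell^+ = \ell/(n-\ell)$, so that the $k$-th term of the sum for $\expectation[\Tip]$ becomes
\[
\frac{n}{n-k}\,\prod_{\ell=k+1}^{i}\frac{\ell}{n-\ell}.
\]
Next I would collapse the telescoping products, writing the numerator as $i!/k!$ and the denominator as $(n-k-1)!/(n-i-1)!$, which yields
\[
\frac{n}{n-k}\cdot\frac{i!\,(n-i-1)!}{k!\,(n-k-1)!} = \frac{n \cdot i! \, (n-i-1)!}{k!\,(n-k)!}.
\]

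The key algebraic step is to recognize this as a product of two standard binomial coefficients. Multiplying and dividing by $(n-1)!$ gives
\[
\frac{n!}{k!\,(n-k)!}\cdot\frac{i!\,(n-i-1)!}{(n-1)!} = \binom{n}{k}\Big/\binom{n-1}{i}.
\]
In particular, the denominator $\binom{n-1}{i}$ no longer depends on the summation index~$k$. Factoring it out, I obtain
\[
\expectation[\Tip] = \frac{1}{\binom{n-1}{i}}\sum_{k=0}^{i}\binom{n}{k} = \binom{n}{\leq i}\Big/\binom{n-1}{i},
\]
which is the desired identity.

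The whole argument is just careful factorial bookkeeping, so there is no real obstacle; the only place to be a little bit careful is in the passage from the product $\prod_{\ell=k+1}^{i}\ell/(n-\ell)$ to the closed form (and to check that the formula degenerates correctly in the boundary case $k=i$, where the empty product is~$1$ and the summand is $n/(n-i)=\binom{n}{i}/\binom{n-1}{i}$, as required).
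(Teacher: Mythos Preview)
Your proof is correct and follows essentially the same route as the paper's: substitute the explicit transition probabilities into Lemma~\ref{lem:droste}, reduce the resulting product to the factorial expression $n\cdot i!\,(n-i-1)!/(k!\,(n-k)!)$, and then recognize this as $\binom{n}{k}/\binom{n-1}{i}$ so that the common denominator can be pulled out of the sum. Your boundary-case remark for $k=i$ is a nice sanity check not explicitly mentioned in the paper.
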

		
		\begin{proof}
			With $\pim=i/n$ and $\pip = (n-i)/n$, Lemma~\ref{lem:droste} gives
			\[
			\begin{aligned}
			\expectation[\Tip] &= \sum_{k=0}^i \frac{n}{n-k} \prod_{\ell = k+1}^i \frac{\ell/n}{(n-\ell)/n} \\
			&= \sum_{k=0}^i \frac{n}{n-k} \prod_{\ell = k+1}^i \frac{\ell}{n - \ell} \\
			&= \sum_{k=0}^i n \cdot \frac{(k+1)(k+2)\cdots i}{(n-k)(n-(k+1))\cdots(n-i)} \\
			&= \sum_{k=0}^i n \cdot \frac{i!(n-i-1)!}{k!(n-k)!} \\
			&=  \sum_{k=0}^i n \cdot \frac{(n-1)! \binom{n-1}{i}^{-1}}{n!\binom{n}{k}^{-1}} \\
			&= \frac{1}{\binom{n-1}{i}} \sum_{k=0}^i \binom{n}{k}. 
			\end{aligned}
			\]
		\end{proof}
		
		Now, standard Markov chain arguments easily prove Theorem~\ref{thm:exact_expectation}.
		\begin{proof}[Proof of Theorem~\ref{thm:exact_expectation}]
			From basic properties of Markov chains and the connection made earlier between random walks on the hypercube and the runtimes we are interested in, we see that the time $T(i)$ to reach an optimum when starting with $i \le n-k-1$ ones satisfies $\expectation[T(i)] = \sum_{j=i}^{n - k -1} \expectation[T_j^+]$. Thus Lemma~\ref{lem:droste_simplified} gives
			\[
			\expectation[T(i)] = \sum_{j=i}^{n-k-1} \binom{n}{\leq j} \Big/ \binom{n-1}{j}.
			\]
			Trivially, $T(i)=0$ when $i \ge n-k$. 
			
			Let $X$ denote the number of ones of the random initial solution of a run of RLS on $\needle_k$, and note that $X$ follows a binomial law with parameters $n$ and $p=1/2$. By the law of total expectation, the runtime $T$ of this run satisfies
			\begin{align*}
			\expectation[T] 
			&= \sum_{i=0}^{n} \Pr[X = i] \, \expectation[T(i)] \\
			&= \sum_{i=0}^{n} \left[\binom{n}{i} 2^{-n} \expectation[T(i)]  \right] \\
			&= \sum_{i=0}^{n-k-1} \left[\binom{n}{i} 2^{-n} \sum_{j=i}^{n-k-1} \binom{n}{\leq j} \Big/ \binom{n-1}{j}  \right].  
			\end{align*}
		\end{proof}
		
		\section{Estimates for the Runtime}
		
		In Theorem~\ref{thm:exact_expectation}, we determined the expected runtime of RLS on generalized \needle functions. Since the precise expression for this runtime is not always easy to interpret, we now estimate this expression. 
		
		We start with the most natural case that $k = o(n)$, which is equivalent to saying that we have a sub-exponential number of global optima, and then move on to larger values of~$k$.
		
		\subsection{Sub-linear Values of $k$}
		
		\begin{theorem}
			\label{thm:asymptotic_for_small_k}
			Let $k = o(n)$ (with $k$ possibly constant).
			Let $T$ be the time spent to optimize $\needle_{n, k}$. Then
			\[
			\expectation[T] \sim 2^n \binom{n}{k}^{-1}.
			\]
			This estimate is of asymptotic order $2^{(1-o(1)) n}$ for all values of~$k$.
		\end{theorem}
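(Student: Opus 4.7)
The plan is to sandwich $\expectation[T]$ between the cost of the final step alone, $b_{n-k-1} := \expectation[T_{n-k-1}^+] = \binom{n}{\leq n-k-1}/\binom{n-1}{k}$ (via Lemma~\ref{lem:droste_simplified}), and the cost $\expectation[T(0)] = \sum_{j=0}^{n-k-1} b_j$ starting from the all-zeros string, and then to show that both sides are asymptotic to $2^n/\binom{n}{k}$.

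For the lower bound, I would combine the identity $\expectation[T] = \sum_{i=0}^{n-k-1} \binom{n}{i} 2^{-n} \expectation[T(i)]$ of Theorem~\ref{thm:exact_expectation} with the observation $\expectation[T(i)] \geq b_{n-k-1}$ for all $i \leq n-k-1$ (the walk must pass through state $n-k-1$ before reaching the optimum). This yields $\expectation[T] \geq b_{n-k-1}\bigl(1 - 2^{-n}\binom{n}{\leq k}\bigr)$. The standard bound $\binom{n}{k} \leq (en/k)^k$ gives $\binom{n}{k} = 2^{o(n)}$ when $k = o(n)$, since $(k/n)\log(n/k) \to 0$; therefore $\binom{n}{\leq k} \leq (k+1)\binom{n}{k} = o(2^n)$. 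Combining with $\binom{n}{\leq n-k-1} = 2^n - \binom{n}{\leq k} \sim 2^n$ and $\binom{n-1}{k} = (1-k/n)\binom{n}{k} \sim \binom{n}{k}$ gives $b_{n-k-1} \sim 2^n/\binom{n}{k}$, which by the same estimate is of asymptotic order $2^{(1-o(1))n}$, yielding the second claim as well.

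For the upper bound, $\expectation[T] \leq \expectation[T(0)]$, so it suffices to show $\sum_{j=0}^{n-k-2} b_j = o(b_{n-k-1})$. I would split the sum at $j = \lfloor n/2\rfloor$. For $j \leq n/2$, the estimate $\binom{n}{\leq j} \leq (j+1)\binom{n}{j}$ together with $\binom{n}{j}/\binom{n-1}{j} = n/(n-j)$ gives $b_j \leq 2(j+1) \leq n+2$, so this block contributes $O(n^2) = o(b_{n-k-1})$. For $j \in (n/2, n-k-2]$, I would use the crude bound $b_j \leq 2^n/\binom{n-1}{j}$ and reindex $s = n-k-1-j \geq 1$, producing the sum $2^n \sum_{s \geq 1} 1/\binom{n-1}{k+s}$. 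The key step is that for $k = o(n)$ the ratio $\binom{n-1}{k+s+1}/\binom{n-1}{k+s} = (n-k-s-1)/(k+s+1)$ is at least $2$ while $s + k \leq (n-2)/3$, so the series is dominated by its first term $1/\binom{n-1}{k+1} = O(k/n)/\binom{n-1}{k}$, while the tail $s > (n-2)/3 - k$ contributes at most $n/\binom{n-1}{\lceil n/3\rceil} = 2^{-\Omega(n)}$. Multiplying by $2^n$ and using $\binom{n-1}{k} \sim \binom{n}{k}$ gives an $o(b_{n-k-1})$ contribution. The main obstacle is this super-geometric decay argument, which quantifies that $k = o(n)$ is precisely the regime in which the final step dominates the expected runtime.
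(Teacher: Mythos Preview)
Your proof is correct and follows the same high-level strategy as the paper: sandwich $\expectation[T]$ between the last summand $b_{n-k-1}$ and $\expectation[T(0)]$, and show that both are asymptotic to $2^n/\binom{n}{k}$ because the final term dominates the whole sum via geometric decay of the reciprocal binomial coefficients. The lower-bound argument is identical to the paper's.

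The upper bound is organised a little differently. The paper (Lemma~\ref{lem:Ti_for_k_o_of_n}) first invokes the additive drift theorem to obtain $\expectation[T(0)] \le n/2 + \expectation[T(n/4)]$ and then bounds $\sum_{j\ge n/4}\binom{n-1}{j}^{-1}$ using symmetry and the ratio argument. You instead split $\sum_{j=0}^{n-k-1} b_j$ at $j=\lfloor n/2\rfloor$ and dispose of the lower half by the elementary inequality $b_j \le 2(j+1)$, which already gives $O(n^2)=o(b_{n-k-1})$; this eliminates the drift step entirely and is a genuine simplification. For the upper half both proofs do essentially the same thing, showing that $\sum_{s\ge 1}\binom{n-1}{k+s}^{-1}$ is dominated by its first term because consecutive ratios exceed~$2$ while $k+s\le n/3$. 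One cosmetic point: your ``$O(k/n)$'' for that first-term ratio should read $O((k+1)/n)$ so that the case $k=0$ is covered, but the conclusion is unaffected.
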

		
		To prove this result, we need the following estimate for $T(i)$, which might be of independent interest.
		
		\begin{lemma}
			\label{lem:Ti_for_k_o_of_n}
			Let $k = o(n)$, and let $T(i) = T_n(i)$ be the runtime of RLS on $\needle_{n,k}$ when starting with an initial solution having exactly $i$ ones. Then for all $i \leq n - k -1$, as $n \to \infty$,
			\[
			\expectation[T_n(i)] \sim 2^n \binom{n}{k}^{-1}.
			\]
		\end{lemma}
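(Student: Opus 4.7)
The plan is to use the exact formula from Theorem~\ref{thm:exact_expectation}, which gives $\expectation[T_n(i)] = \sum_{j=i}^{n-k-1} a_j$ with $a_j := \binom{n}{\le j}/\binom{n-1}{j}$, and to show that the final term $a_{n-k-1}$ dominates the whole sum and has value asymptotic to $2^n/\binom{n}{k}$.

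Step one is to estimate $a_{n-k-1}$ itself. The denominator is $\binom{n-1}{k} = \binom{n}{k}(n-k)/n \sim \binom{n}{k}$. The numerator satisfies $\binom{n}{\le n-k-1} = 2^n - \binom{n}{\ge n-k} \ge 2^n - (k+1)\binom{n}{k}$. Since $k = o(n)$ implies $\binom{n}{k} = 2^{o(n)}$ (via the standard estimate $\binom{n}{k} \le (en/k)^k$), we get $\binom{n}{\le n-k-1} \sim 2^n$, and hence $a_{n-k-1} \sim 2^n/\binom{n}{k}$. I note that this quantity is exponentially large, of order $2^{n-o(n)}$, which will matter in the next step.

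Step two is the main obstacle: showing $\sum_{j=i}^{n-k-2} a_j = o(a_{n-k-1})$. Neither of the two natural bounds on $a_j$ is tight on the whole range, so I would split the sum. For $j \le \lfloor n/2 \rfloor$, the crude bound $\binom{n}{\le j} \le (j+1)\binom{n}{j}$ combined with $\binom{n}{j}/\binom{n-1}{j} = n/(n-j) \le 2$ gives $a_j \le 2(j+1)$, and this part contributes $O(n^2)$, which is $o(a_{n-k-1})$ because $a_{n-k-1}$ is exponential in $n$. For $j \in (\lfloor n/2\rfloor,\, n-k-2]$, I would exploit the ratio identity
\[
\frac{a_j}{a_{j+1}} = \frac{\binom{n}{\le j}}{\binom{n}{\le j+1}} \cdot \frac{n-1-j}{j+1} \le \frac{n-1-j}{j+1}
\]
and telescope down from $a_{n-k-1}$ to obtain $a_j/a_{n-k-1} \le \binom{n-1}{k}/\binom{n-1}{j}$. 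Summing and substituting $v = n-1-j$, the task reduces to bounding $\binom{n-1}{k}\sum_{v=k+1}^{\lfloor(n-1)/2\rfloor} 1/\binom{n-1}{v}$. On this range the ratio $\binom{n-1}{v+1}/\binom{n-1}{v} = (n-1-v)/(v+1)$ is at least $2$ (for $n$ large enough, using $k = o(n)$), so the sum is a geometric-type series controlled by its first term, yielding a contribution of order $\binom{n-1}{k}/\binom{n-1}{k+1} \cdot a_{n-k-1} = (k+1)/(n-k-1) \cdot a_{n-k-1} = o(a_{n-k-1})$. Combining both parts, $\sum_{j=i}^{n-k-1} a_j \sim a_{n-k-1} \sim 2^n/\binom{n}{k}$, as required.
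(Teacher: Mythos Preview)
Your argument is correct in outline and is actually cleaner than the paper's in one respect: where the paper invokes the additive drift theorem to reduce $\expectation[T(0)]$ to $\expectation[T(n/4)]+O(n)$ before estimating the remaining sum, you bound the contribution of all $j\le n/2$ directly via $a_j \le 2(j+1)$, obtaining the same $O(n^2)$ at lower cost. Both proofs then come down to showing that $\sum_v 1/\binom{n-1}{v}$ over $v$ between $k+1$ and roughly $n/2$ is dominated by its first term.

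There is one small slip in step two. The ratio $(n-1-v)/(v+1)$ is \emph{not} at least $2$ on the whole range $k+1 \le v \le \lfloor (n-1)/2\rfloor$; it drops below $2$ once $v > (n-3)/3$ and tends to $1$ as $v$ approaches $(n-1)/2$, so your geometric series bound does not apply on the full interval. The fix is the one the paper uses: split once more at, say, $v = n/4$. For $v \le n/4$ the ratio is bounded below by a constant strictly greater than $1$ (approaching $3$), and the geometric argument gives a contribution $O\bigl(1/\binom{n-1}{k+1}\bigr)$; for $n/4 \le v \le (n-1)/2$ there are $O(n)$ terms, each at most $1/\binom{n-1}{\lfloor n/4\rfloor}$, and since $\binom{n-1}{k} = 2^{o(n)}$ while $\binom{n-1}{\lfloor n/4\rfloor}$ is exponential in $n$, this piece contributes $o(1)$ after multiplication by $\binom{n-1}{k}$. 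With this patch your conclusion $\sum_{j\le n-k-2} a_j = o(a_{n-k-1})$ stands, and the rest of the argument goes through as written.
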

		\begin{proof}
			Let $X$ be the number of ones of the random initial solution. So $X$ is a binomial random variable with parameters $n$ and $1/2$. Then $\Pr[X \le n-k-1] \sim 1$ because $\Pr[X \le n/2 + r] \sim 1$  for all $r = \omega(\sqrt n)$ by well-known properties of the binomial distribution (or via an elementary application of an additive Chernoff bound, e.g., Theorem~1.10.7 in~\cite{Doerr20bookchapter}). Therefore, $\binom{n}{\leq n-k-1} \sim 2^n$. Consequently, we will be finished when we show
			\[
			\expectation[T(i)] \sim \binom{n}{\leq n-k-1} \binom{n}{k}^{-1}.
			\]
			
			By Theorem~\ref{thm:exact_expectation}, the runtime of RLS on $\needle_k$, starting from $i \le n-k-1$ ones, is
			\[\tag{1}\label{eq:expectation_Ti}
			\expectation[T(i)] = \sum_{j=i}^{n-k-1} \binom{n}{\leq j} \Big/ \binom{n-1}{j}.
			\]
			Taking only the last term in this sum, we see that 
			\[
			\begin{aligned}
			\expectation[T(i)] \geq \binom{n}{\leq n-k-1} \Big/ \binom{n-1}{n-k-1} &= \binom{n}{\leq n-k-1} \Big/\binom{n-1}{k} \\
			&\geq \binom{n}{\leq n-k-1} \Big/\binom{n}{k}.
			\end{aligned}
			\]
			
			Next, we show that $\expectation[T(i)] \leq (1+o(1)) \binom{n}{\le n-k-1} / \binom{n}{k}$, which will complete this proof. Since $\expectation[T(i)]$ is greatest for $i = 0$, we need only show the upper bound for $\expectation[T(0)]$. 
			
			To this aim, let us consider a run of RLS on $\needle_{n,k}$ started in the all-zeros solution. Let $I_t$ be the number of ones of the current individual in generation $t$, and let $Y$ be the number of steps until the current individual has $n/4$ ones (i.e.\ $Y$ is the minimum of $t$ such that $I_t = n/4$). While $I_t \leq n/4$, notice that we have $\Pr(I_{t+1} = I_t + 1) \geq 3/4$, and $\Pr(I_{t+1}) = I_t - 1 \leq 1/4$. Consequently, the expected drift of $I_t$ toward $n/4$ is 1/2 (in other words, for $s \leq n/4$, we have $\expectation[I_{t+1}-I_t \mid I_t = s] \geq 1/2$.) Therefore, by the additive drift theorem of He and Yao~\cite{HeY01} (Theorem~1 of~\cite{Lengler20bookchapter}), we have $\expectation[Y]  \leq (n/4)/(1/2) = n/2$. Therefore, 
			$$\expectation[T(0)] \leq n/2 + \expectation[T(n/4)].$$
			
			Because $n/2 = o\big(\binom{n}{\le n-k-1} / \binom{n}{k}\big)$, the conclusion of the previous paragraph shows that we need only show that $\expectation[T(n/4)] \leq (1+o(1)) \binom{n}{\le n-k-1} / \binom{n}{k}$.
			
			Let $S(n) = \sum_{j=n/4}^{n-k-1} \binom{n-1}{j}^{-1} = \sum_{j=n/2-1}^{n-k-1}\binom{n-1}{j}^{-1} + \sum_{j=n/4}^{n/2}\binom{n-1}{j}^{-1}$. For the first summation, replace $\binom{n-1}{j}^{-1}$ with $\binom{n-1}{n-1- j}^{-1}$, 
			the biggest term then is $\binom{n-1}{k}^{-1}$. We then have $S(n) = \sum_{j=k}^{n/2}\binom{n-1}{j}^{-1} + \sum_{j=n/4}^{n/2}\binom{n-1}{j}^{-1}$.
			Because $k = o(n)$, that implies that $(1+o(1)) \sum_{j=k+1}^{n/2}\binom{n-1}{j}^{-1} > \sum_{j=n/4}^{n/2}\binom{n-1}{j}^{-1}$, and hence
			\[\tag{2}\label{Sn_upper_bound}
			S(n) \leq \binom{n-1}{k}^{-1} + 2 (1+o(1)) \cdot \sum_{j=k+1}^{n/2}\binom{n-1}{j}^{-1}.
			\]
			
			Let $x_j$ be such that $\binom{n-1}{j}^{-1}x_j = \binom{n-1}{j+1}^{-1}$. Then $x_j = (j+1)/(n-j-1)$. 
			Notice that $x_k = o(n)/(n-o(n)) = o(1)$. Let $m = n/4 - 1$. Then we have that $x_m = (n/4)/(n-n/4) = 1/3$, and hence for $j \leq n/4$, we have $x_j \leq 1/3$. Therefore,
			\[
			\begin{aligned}
			\sum_{j=k+1}^{n/2}\binom{n-1}{j}^{-1} &= \sum_{j=k+1}^{n/4-1} \binom{n-1}{j}^{-1} + \sum_{j=n/4}^{n/2} \binom{n-1}{j}^{-1} \\
			&\leq \sum_{j=k+1}^{n/4-1} (1/3)^{j - k - 1}\binom{n-1}{k+1}^{-1} + \sum_{j=n/4}^{n/2} (1/3)^{\Theta(n)}\binom{n-1}{k+1}^{-1} \\
			&= \Theta\left( \binom{n-1}{k+1}^{-1}\right).
			\end{aligned}
			\]
			
			Combining this with inequality (\ref{Sn_upper_bound}) on $S(n)$ yields $S(n) \leq \binom{n-1}{k}^{-1} + \Theta\left( \binom{n-1}{k+1}^{-1}\right)$. Earlier, we noted that for $\binom{n-1}{k}^{-1}x_k = \binom{n-1}{k+1}^{-1}$, we have $x_k = o(1)$. We may conclude then that $S(n) \leq (1+o(1))\binom{n-1}{k}^{-1}$. From equality (\ref{eq:expectation_Ti}) at the beginning of the proof and the definition of $S(n)$, we get 
			\[
			\begin{aligned}
			\expectation[T(n/4)] &\leq \binom{n}{\leq n - k -1} S(n) \\
			&\leq (1+o(1))\binom{n}{\leq n - k -1}\binom{n-1}{k}^{-1}\\
			&= (1+o(1))\binom{n}{\leq n - k -1}\binom{n}{k}^{-1}, 
			\end{aligned}
			\]
			where the equality absorbed into the $o(1)$ the term $\binom{n-1}{k}^{-1}/\binom{n}{k}^{-1} \sim 1$. We are done once we recall that we already showed $\expectation[T(0)] \leq n/2 + \expectation[T(n/4)]$.
		\end{proof}
		
		\begin{proof}[Proof of Theorem~\ref{thm:asymptotic_for_small_k}]
			Let $\expectation[T(i)]$ be the runtime of RLS on $\needle_{n,k}$ when starting with an initial solution having exactly $i$ ones.
			By Lemma~\ref{lem:Ti_for_k_o_of_n}, for all $i \leq n - k -1$, as $n \to \infty$,
			\[
			\expectation[T(i)] \sim 2^n \binom{n}{k}^{-1}.
			\]
			We have $\expectation[T] = \sum_{i=0}^n \Pr[X = i]\expectation[T(i)] = \sum_{i=0}^{n-k-1} \Pr[X = i]\expectation[T(i)]$. To get an upper bound, replace each $\expectation[T(i)]$ with $\expectation[T(0)]$, and to get a lower bound, replace each $\expectation[T(i)]$ with $T(n-k-1)$. Recall $\Pr[X \le n/2 + r] \to 1$ for all $r = \omega(\sqrt n)$ by well-known properties of the binomial distribution (or via an elementary application of an additive Chernoff bound, e.g., Theorem~1.10.7 in~\cite{Doerr20bookchapter}). 
			Hence, because $\Pr[X \leq n - k -1] \to 1$, we get 
			\[
			\sum_{i=0}^{n-k-1} \Pr[X = i]\expectation[T(i)] \sim 2^n \binom{n}{k}^{-1}.
			\]
			That this last expression is $2^{(1-o(1))n}$ follows easily from the standard estimate $\binom{n}{k} \le (\frac {ne}k)^k$ and noting that $(\frac nk)^k = \exp(o(n))$ for $k=o(n)$.
		\end{proof}
		
		\subsection{Linear Values for $k$, Linearly Bounded Away From~$n/2$}
		
		In the case of sublinear values of $k$ just analyzed, we conveniently exploited the fact that that we had a precise estimate for $T(i)$ that was independent of~$i$ (Lemma~\ref{lem:Ti_for_k_o_of_n}).
		
		This approach is not possible once we turn to the case that $k$ is linear in $n$, but less than $n/2$, since the distance to $n/2$ will be the crucial quantity. We use the parameterization $k = n/2 - \eps n$ for a given positive $\eps$ that is less than $1/2$. 
		
		In this case, the expectations of the $T(i)$, as determined in Theorem~\ref{thm:exact_expectation}, will not be identical apart from lower order terms. Fortunately, we will show that there is a starting value $a$ such that the expected runtime $\expectation[T]$ is of the same asymptotic order as the expected time $\expectation[T(a)]$ when starting with $a$ ones. This statement is made precise in the following lemma, from which we will then easily derive our runtime estimate in Theorem~\ref{thm:linear_k}.
		
		\begin{lemma}
			\label{lem:expectation_is_Theta_expectation_Ta}
			Let $k \leq n/2$ and let $T$ be the runtime on $\needle_{k}$. Let $X$ be a binomial random variable with parameters $n$ and 1/2. Let 
			\[
			a = \big\lfloor \expectation[X \mid X \le n-k]\,\big\rfloor.
			\]
			Then $\expectation[T] = \Theta(\expectation[T(a)])$.
		\end{lemma}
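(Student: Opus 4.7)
The plan is to reorganize the formula from Theorem~\ref{thm:exact_expectation} by exchanging the order of summation: since $\expectation[T(i)] = \sum_{j=i}^{n-k-1} \expectation[T_j^+]$, one obtains
\[
\expectation[T] = \sum_{j=0}^{n-k-1} \expectation[T_j^+] \, \Pr[X \le j],
\]
a weighted version of $\expectation[T(a)] = \sum_{j=a}^{n-k-1} \expectation[T_j^+]$. Establishing the $\Theta$-relation then reduces to comparing these two sums against each other.

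For the lower bound, I would keep only the tail $j \ge a$ in the sum and use $\Pr[X \le j] \ge \Pr[X \le a]$, yielding $\expectation[T] \ge \Pr[X \le a] \cdot \expectation[T(a)]$. For the upper bound, I would split at $j=a$: the part $j \ge a$ is at most $\expectation[T(a)]$ from the trivial bound $\Pr[X \le j] \le 1$, and the part $j < a$ is at most
\[
\expectation[T_a^+] \cdot \sum_{j=0}^{a-1} \Pr[X \le j] = \expectation[T_a^+] \cdot \expectation[(a-X)_+],
\]
using Lemma~\ref{lem:Tip_is_increasing} ($\expectation[T_j^+]$ is increasing in $j$) and the standard identity $\sum_{j=0}^{a-1}\Pr[X \le j] = \expectation[(a-X)_+]$. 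Combined with the monotonicity bound $\expectation[T(a)] \ge (n-k-a)\expectation[T_a^+]$ (again from Lemma~\ref{lem:Tip_is_increasing}), the whole argument reduces to three tail estimates on the binomial:
\[
\Pr[X \le a] = \Omega(1), \qquad \expectation[(a-X)_+] = O(\sqrt n), \qquad n-k-a = \Omega(\sqrt n).
\]

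The main obstacle is establishing these three estimates, all of which are statements about the truncated mean $\expectation[X \mid X \le n-k]$. The first two follow once we show $|a - n/2| = O(\sqrt n)$: truncating a binomial at a point $\ge n/2$ shifts its mean by at most $O(\sqrt n)$, as can be seen by estimating the contribution of the truncated right tail to $\expectation[X]$ via Chernoff or the CLT. The third estimate, $n-k-a = \Omega(\sqrt n)$, is immediate when $n-k \ge n/2 + \sqrt n$, since the slack $n-k-n/2$ alone suffices; it requires more care when $n-k \in [n/2, n/2 + O(\sqrt n))$, where a normal approximation shows that the conditional distribution is essentially a half-Gaussian whose mean sits $\Theta(\sqrt n)$ below $n-k$. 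Once these three estimates are in hand, combining them with the two splittings above yields $\expectation[T] = \Theta(\expectation[T(a)])$.
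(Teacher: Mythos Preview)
Your plan is correct and, for the upper bound, takes a genuinely different route from the paper. The lower bound is identical to the paper's: $\expectation[T] \ge \Pr[X \le a]\,\expectation[T(a)]$, combined with $a \ge n/2 - O(\sqrt n)$ (the paper's Lemma~\ref{lem:binom_expectation_ineq}) to get $\Pr[X\le a]=\Omega(1)$. For the upper bound, however, the paper observes that $i \mapsto \expectation[T(i)]$, linearly interpolated on $[0,n-k]$, is \emph{concave} (an immediate consequence of $\expectation[T_i^+]$ being increasing), and then applies Jensen's inequality to the weighted average $\expectation[T]/w = \sum_{i\le n-k}(p_i/w)\,\expectation[T(i)]$ to obtain $\expectation[T]\le w\,\expectation[T(a)]\le \expectation[T(a)]$ in one stroke. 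In particular, the paper never needs your third estimate $n-k-a=\Omega(\sqrt n)$. Your approach is more hands-on: after swapping the order of summation you compare the part $j<a$ against $\expectation[T(a)]$ via $\expectation[T_a^+]\le \expectation[T(a)]/(n-k-a)$, which forces you to establish both $\expectation[(a-X)_+]=O(\sqrt n)$ and $n-k-a=\Omega(\sqrt n)$. (The latter does hold, and more directly than the half-Gaussian heuristic you sketch: $n-k-\expectation[X\mid X\le n-k] = \expectation[(n-k-X)_+]/\Pr[X\le n-k] \ge 2\,\expectation[(n/2-X)_+] = \Theta(\sqrt n)$.) The Jensen route buys a shorter argument with one fewer tail estimate; your route avoids the concavity/interpolation lemma and makes the role of the increments $\expectation[T_j^+]$ more explicit.
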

		
		The proof of this result, building on Jensen's inequality for convex/concave functions, relies on several technical lemmas, which we provide now. The first of these will ensure that $T(i)$, viewed as a function in $i$ and linearly interpolated between its integral arguments, is a concave function.
		
		\begin{lemma}
			\label{lem:concavity_from_n_points}
			Let $f : [0.. k-1] \to \R$ be a nondecreasing function. Also, define $\tilde{F} : [0..k] \to \R$  
			by $\tilde{F}(i) = \sum_{j=i}^{k-1} f(j)$ and $\tilde{F}(k) = 0$, and define $F: [0, k] \to \R$ from $\tilde{F}$ by linear interpolating between
			the points. In other words, for $x \in [0,k]$,
			\[
			F(x) =
			\begin{cases}
			\tilde{F}(x) &\text{ if } x \in [0..k] \\
			p\tilde{F}(\lfloor x \rfloor) + (1-p) \tilde{F}(\lceil x \rceil) &\text{ if } x = p\lfloor x \rfloor + (1-p)\lceil x \rceil  \\
			& \text{\quad\quad\quad for some  } p \in (0, 1).
			\end{cases}
			\]
			Then $F$ is a concave function on $[0, k]$.
		\end{lemma}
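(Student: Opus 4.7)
The plan is to verify concavity of the piecewise linear function $F$ by checking the standard slope-monotonicity criterion: a continuous function on $[0,k]$ that is linear on each unit subinterval $[i,i+1]$ is concave if and only if the slope on $[i,i+1]$ is nonincreasing in $i$. So the whole argument reduces to computing these slopes and invoking the monotonicity of $f$.

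First I would compute the slope of $F$ on $[i,i+1]$. Since $F$ agrees with $\tilde F$ on integer points and is linear in between, this slope equals
\[
\tilde F(i+1) - \tilde F(i) = \sum_{j=i+1}^{k-1} f(j) - \sum_{j=i}^{k-1} f(j) = -f(i),
\]
using the convention $\tilde F(k) = 0$ for the case $i = k-1$. Then since $f$ is nondecreasing, the sequence $(-f(i))_{i=0}^{k-1}$ is nonincreasing, so the slopes of $F$ decrease weakly as we move left to right.

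The only point that requires a bit of care is justifying the slope-monotonicity characterization of concavity itself. I would do this by the standard three-chord argument: for $x < y < z$ in $[0,k]$, the chord slope $(F(y) - F(x))/(y-x)$ is a convex combination of the slopes $-f(i)$ on the subintervals covering $[x,y]$, and similarly $(F(z) - F(y))/(z-y)$ is a convex combination of slopes on subintervals covering $[y,z]$. Because every slope involved in the first average is at least every slope in the second one, the first chord slope is at least the second, and this secant-slope inequality is equivalent to concavity.

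I do not expect any real obstacle here. The lemma is essentially bookkeeping: the telescoping definition of $\tilde F$ makes the slopes come out to $-f(i)$ immediately, and the rest is a textbook fact about piecewise linear functions. If a more self-contained argument is desired, one can bypass the general characterization and verify concavity directly at the breakpoints (i.e., show $\tilde F(i) \geq \tfrac12(\tilde F(i-1) + \tilde F(i+1))$ for $i \in [1..k-1]$, which unpacks to $f(i-1) \leq f(i)$) and then extend to arbitrary convex combinations on $[0,k]$ by linearity on each subinterval.
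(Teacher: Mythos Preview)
Your proof is correct and follows essentially the same approach as the paper: compute the slope of $F$ on each unit interval (it telescopes to $-f(i)$), then invoke the standard fact that a continuous piecewise linear function is concave iff its successive slopes are nonincreasing. You actually go a bit further than the paper by sketching a justification for the slope-monotonicity criterion and offering an alternative midpoint-concavity check, whereas the paper simply asserts the criterion and applies it.
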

		\begin{proof}
			A continuous function that is piecewise linear is concave if and only if
			the sequence of successive slopes is nonincreasing. For $1 \leq i \leq k-1$, the $i$th slope of $F$
			is 
			\[
			\frac{F(i) - F(i-1)}{i - (i-1)} = \sum_{j=i}^{k-1} f(j) - \sum_{j=i-1}^{k-1} f(j)
			= -f(i-1).
			\]
			The $k$th slope is $-f(k-1)$. Since $f$ is nondecreasing, $-f$ is nonincreasing, and this proves the claim.
		\end{proof}
		
		With the above lemma, we can show the following claim, which will then, with a suitable estimate of the quantity~$u$, imply Lemma~\ref{lem:expectation_is_Theta_expectation_Ta}.
		
		\begin{lemma}	\label{lem:theta_bound_on_expectation_T}
			Let $T$ be the runtime on $\needle_k$, and let
			$T(j)$ be the runtime when starting from an individual with $j$ ones. Let $X$ be a binomial random variable with parameters $n$ and 1/2. For all $i \in [0..n]$, let $p_i = \Pr[X = i]$. Let further
			\begin{align*}
			w &= \sum_{i=0}^{n - k}p_i = \Pr[X \le n-k],\\
			a &= \left\lfloor w^{-1} \sum_{i=0}^{n - k}p_i i \right\rfloor = \big\lfloor \expectation[X \mid X \le n-k]\, \big\rfloor,\\
			u &= \sum_{i=0}^{a}p_i = \Pr[X \le a].     
			\end{align*}
			Then
			\[
			u \expectation[T(a)] \leq \expectation[T] \leq w \expectation[T(a)].
			\]
		\end{lemma}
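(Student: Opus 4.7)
The plan is to start from the law of total expectation together with Theorem~\ref{thm:exact_expectation}: since $\expectation[T(i)] = 0$ for $i \in [n-k..n]$, we have
\[
\expectation[T] = \sum_{i=0}^{n-k} p_i \, \expectation[T(i)].
\]
The two bounds in the statement will then be obtained by different manipulations of this sum, the lower bound by monotonicity and the upper bound by concavity and Jensen's inequality.

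For the lower bound, observe that $\expectation[T(i)] = \sum_{j=i}^{n-k-1} \expectation[T_j^+]$ is a sum of nonnegative terms, so it is nonincreasing in $i$ on $[0..n-k]$. Restricting the sum for $\expectation[T]$ to $i \in [0..a]$ and bounding each $\expectation[T(i)] \geq \expectation[T(a)]$ gives
\[
\expectation[T] \geq \sum_{i=0}^{a} p_i \, \expectation[T(a)] = u \, \expectation[T(a)],
\]
which is the required lower bound.

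For the upper bound, I would invoke concavity. By Lemma~\ref{lem:Tip_is_increasing} the sequence $f(j) := \expectation[T_j^+]$ is nondecreasing on $[0..n-k-1]$, so Lemma~\ref{lem:concavity_from_n_points} (applied with $n-k$ playing the role of $k$) tells us that the continuous piecewise-linear extension $F:[0,n-k]\to\R$ of the map $i \mapsto \expectation[T(i)]$ is concave. Let $\bar{a} := w^{-1}\sum_{i=0}^{n-k} p_i \, i = \expectation[X \mid X \leq n-k]$, so that $a = \lfloor \bar{a}\rfloor$. Applying Jensen's inequality to $F$ with respect to the conditional distribution of $X$ given $X \leq n-k$ yields
\[
w^{-1}\expectation[T] = w^{-1} \sum_{i=0}^{n-k} p_i F(i) \leq F(\bar{a}).
\]
Because $f \geq 0$, the function $F$ is also nonincreasing, and since $\bar{a} \geq a$, this gives $F(\bar{a}) \leq F(a) = \expectation[T(a)]$. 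Multiplying through by $w$ delivers the upper bound $\expectation[T] \leq w \, \expectation[T(a)]$.

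The main obstacle is the upper bound: one has to notice that the right object to feed into Jensen's inequality is not the discrete sequence $\expectation[T(i)]$ itself but its linear interpolation $F$, whose concavity is precisely the content of Lemma~\ref{lem:concavity_from_n_points}. Once this is in place, the very definition of $a$ as $\lfloor \expectation[X \mid X \leq n-k]\rfloor$ makes the conditional Jensen step match the stated bound, with the final monotonicity observation $F(\bar{a}) \leq F(a)$ absorbing the difference between $\bar{a}$ and its floor.
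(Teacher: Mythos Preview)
Your proof is correct and follows essentially the same route as the paper: the lower bound via monotonicity of $i\mapsto\expectation[T(i)]$ and the upper bound via Lemma~\ref{lem:Tip_is_increasing}, Lemma~\ref{lem:concavity_from_n_points}, and Jensen's inequality applied to the piecewise-linear interpolation, followed by the monotonicity step $F(\bar a)\le F(a)$. Your write-up is in fact somewhat more explicit than the paper's in spelling out the conditional-expectation form of Jensen and the role of $\bar a$ versus $a=\lfloor\bar a\rfloor$.
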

		\begin{proof}
			By Lemma~\ref{lem:Tip_is_increasing}, $\expectation[\Tip]$ is increasing in $i$.
			Therefore, Lemma~\ref{lem:concavity_from_n_points} implies that $\expectation[T(i)]$ (with
			linear interpolation) is a concave function on $[0,n-k]$. Jensen's inequality then
			implies that 
			\[
			\frac{\expectation[T]}{w} \leq \expectation[T(b)],
			\]
			where $b = w^{-1} \sum_{i=0}^{n-k}p_i i$. Then $\expectation[T] \leq w \expectation[T(a)]$ since
			$a = \lfloor b \rfloor$ and since $\expectation[T(i)]$, and thus also its linear interpolation, are nonincreasing.
			
			The lower bound follows since with probability $u$, the starting point of the algorithm is with an individual having $a$ ones or fewer.
		\end{proof}
		
		What is missing for the proof of Lemma~\ref{lem:expectation_is_Theta_expectation_Ta} is the following estimate on the conditional expectation of a binomial random variable.
		
		\begin{lemma}
			\label{lem:binom_expectation_ineq}
			Let $h : \N \to \R_{\ge 0}$. 
			Let $X$ be a binomial random variable with parameters $n$ and 1/2. Let $A$ be the event that $X \leq n/2 + h(n)$. Then 
			\[
			\expectation[X \mid A] \geq n/2 - \sqrt{\frac{n}{2\pi}} - o(\sqrt{n}\,).
			\]
		\end{lemma}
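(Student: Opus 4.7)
The plan is to start from the trivial decomposition $\expectation[X] = \expectation[X \bbone_A] + \expectation[X \bbone_{A^c}] = n/2$ and, by writing $X = (X - n/2) + n/2$ in the second summand and then dividing by $\Pr[A]$, rewrite the target quantity as
\[
\expectation[X \mid A] = \frac{n}{2} - \frac{\expectation[(X - n/2)\, \bbone_{A^c}]}{\Pr[A]}.
\]
The task then reduces to showing that the subtracted fraction is at most $\sqrt{n/(2\pi)} + o(\sqrt n)$.

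For this, I would bound the numerator from above and the denominator from below. The hypothesis $h(n) \ge 0$ gives $A^c \subseteq \{X > n/2\}$, so pointwise $(X - n/2)\bbone_{A^c} \le (X - n/2)^+$ and hence $\expectation[(X - n/2)\bbone_{A^c}] \le \expectation[(X - n/2)^+]$. The distributional symmetry of $\mathrm{Bin}(n,1/2)$ around $n/2$ (i.e., $X$ and $n-X$ have the same law) gives $\expectation[(X - n/2)^+] = \expectation[(X - n/2)^-]$; combined with $\expectation[X - n/2] = 0$, this yields $\expectation[(X - n/2)^+] = \tfrac{1}{2}\expectation[|X - n/2|]$. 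The same symmetry bounds the denominator via $\Pr[A] \ge \Pr[X \le n/2] \ge 1/2$.

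The main technical step is the classical asymptotic $\expectation[|X - n/2|] = (1+o(1))\sqrt{n/(2\pi)}$, which I would obtain via the central limit theorem together with uniform integrability: with $\sigma = \sqrt n/2$, we have $(X - n/2)/\sigma \to Z \sim N(0,1)$ in distribution, and since the second moment of $(X - n/2)/\sigma$ equals $1$ for every $n$, the family is uniformly integrable, whence $\expectation[|X - n/2|]/\sigma \to \expectation[|Z|] = \sqrt{2/\pi}$; multiplying by $\sigma$ yields the claimed asymptotic. (As an alternative, one could Stirling-approximate the known closed form for the mean absolute deviation of a symmetric binomial.) Combining the three estimates gives
\[
\frac{\expectation[(X - n/2)\bbone_{A^c}]}{\Pr[A]} \le \frac{\tfrac{1}{2}\expectation[|X-n/2|]}{1/2} = \expectation[|X - n/2|] = \sqrt{n/(2\pi)} + o(\sqrt n),
\]
which together with the identity of the first paragraph completes the proof. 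I do not anticipate a genuine obstacle: the reduction is elementary and uses only the non-negativity of $h(n)$ together with the symmetry of $\mathrm{Bin}(n,1/2)$, while the mean-absolute-deviation asymptotic is classical.
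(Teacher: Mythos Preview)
Your proof is correct and takes a genuinely different route from the paper's. The paper first reduces to $h(n)=0$ by monotonicity, then (for even $n$) computes $\Pr[A]$ and $\expectation[X\mid A]$ exactly via the identity $k\binom{n}{k}=n\binom{n-1}{k-1}$, obtaining the closed form $\expectation[X\mid A]=n/(4\Pr[A])$; it finishes by Stirling-approximating the central binomial coefficient and applying $1/(1+x)\ge 1-x$, with a separate argument sketched for odd~$n$. Your approach instead writes $\expectation[X\mid A]=n/2-\expectation[(X-n/2)\mathbf 1_{A^c}]/\Pr[A]$ and bounds the correction term by $\expectation[|X-n/2|]$ using only symmetry and the classical mean-absolute-deviation asymptotic. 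The paper's computation yields an exact identity that could in principle deliver sharper constants, at the price of a combinatorial calculation and a parity case split; your argument is cleaner, handles even and odd $n$ uniformly, and relies only on standard probabilistic facts (CLT plus uniform integrability, or equivalently the Stirling-approximated closed form for the mean absolute deviation), making it easier to adapt to related settings.
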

		
		\begin{proof}
			Clearly, $\expectation[X \mid A]$ increases if we replace $h(n)$ with any $H(n) \geq h(n)$, and so it suffices to prove the claim for $h(n)=0$.
			
			First, assume that $n$ is even. Let $S = \sum_{i=0}^{n/2 -1} \binom{n}{i}$. Since $2^n = \sum_{i=0}^{n} \binom{n}{i}$, and $\binom{n}{i} = \binom{n}{n-i}$ for all $i \in [0..n]$, we have $S + \binom{n}{n/2} = \big(2^n + \binom{n}{n/2}\big)/2$. Therefore, 
			
			\[\begin{aligned}
			\Pr(A) &= 2^{-n}\left(S + \binom{n}{n/2}\right) \\
			&= (2^{-n}) \frac{2^n + \binom{n}{n/2}}{2}\\
			&= \frac{1 + 2^{-n}\binom{n}{n/2}}{2}\\
			&= \frac{1 + (1+o(1))\sqrt{2/(\pi  n)}}{2},
			\end{aligned} 
			\]
			where in this last step, we used that $\binom{n}{n/2} = (1+o(1)) 2^n \sqrt{2/(\pi  n)}$.
			
			We have that
			\[\begin{aligned}
			\expectation[X \mid A] &= \frac{1}{\Pr(A)}\sum_{k=0}^{n/2} k \binom{n}{k}2^{-n}  \\
			&= \frac{1}{2^n\Pr(A)}\sum_{k=1}^{n/2} k \binom{n}{k} \\
			&= \frac{1}{2^n\Pr(A)}\sum_{k=1}^{n/2} n \binom{n-1}{k-1} \\
			&= \frac{n}{2^n\Pr(A)}\sum_{j=0}^{n/2-1} \binom{n-1}{j}\\
			&= \frac{n}{2^n\Pr(A)} \cdot \frac{2^{n-1}}{2}\\
			&= \frac{n}{4\Pr(A)}
			\end{aligned}
			\]
			Using what we found $\Pr(A)$ to be and also using $1/(1+x) \geq 1 - x$ gives
			\[\begin{aligned}
			\expectation[X \mid A] &= \frac{n}{2}\left(\frac{1}{1 + (1 + o(1))\sqrt{2/(\pi  n)}}\right)\\
			&\geq \frac{n}{2}\left((1 - (1 + o(1))\sqrt{2/(\pi  n)}\right),
			\end{aligned}
			\]
			which proves this result for even $n$. A similar argument works for odd $n$.
		\end{proof}

		We are now ready to prove Lemma~\ref{lem:expectation_is_Theta_expectation_Ta} as an easy consequence of Lemmas~\ref{lem:theta_bound_on_expectation_T} and \ref{lem:binom_expectation_ineq}. 
		
		\begin{proof}[Proof of Lemma~\ref{lem:expectation_is_Theta_expectation_Ta}]
			Consider $a = \lfloor \expectation[X \mid X \le n-k]\,\rfloor$ from Lemmas~\ref{lem:expectation_is_Theta_expectation_Ta} and~\ref{lem:theta_bound_on_expectation_T}.
			Lemma~\ref{lem:binom_expectation_ineq} implies that $a \geq n/2 - \sqrt{\frac{n}{2\pi}} - o(\sqrt{n})$. Therefore, the $u$ from Lemma~\ref{lem:theta_bound_on_expectation_T} satisfies $u = \Theta(1)$. Consequently, Lemma~\ref{lem:theta_bound_on_expectation_T} implies that
			\[
			\expectation[T] = \Theta(\expectation[T(a)]).\qedhere
			\]
		\end{proof}

		From Lemma~\ref{lem:expectation_is_Theta_expectation_Ta}, we easily obtain the main result of this subsection, the following runtime estimate for the case $k=n/2 - \eps n$, which is tight apart from constant factors.
		
		\begin{theorem}
			\label{thm:linear_k}
			Let $k = n/2 - \eps n$, where $0 < \eps < 1/2$. Let $T$ be the time spent to optimize $\needle_{k}$. Then 
			\[
			\expectation[T] = \Theta\left( 2^n \binom{n}{k}^{-1} \right).
			\]
			This estimate is exponential in $n$ for all values of $\varepsilon$.
		\end{theorem}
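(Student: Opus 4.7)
The strategy is to invoke Lemma~\ref{lem:expectation_is_Theta_expectation_Ta}, which reduces the problem to proving $\expectation[T(a)] = \Theta(2^n \binom{n}{k}^{-1})$ for $a = \lfloor \expectation[X \mid X \le n-k]\rfloor$, and then to estimate $\expectation[T(a)]$ directly via the closed form from Theorem~\ref{thm:exact_expectation}. Lemma~\ref{lem:binom_expectation_ineq} already gives $a = n/2 - O(\sqrt n)$, so a fortiori $a \le n-k-1$ for $n$ large, and the sum
\[
\expectation[T(a)] = \sum_{j=a}^{n-k-1} \binom{n}{\le j}\Big/\binom{n-1}{j}
\]
is well-defined.

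The key simplification is to handle numerator and denominator separately. For the numerator, since $j \ge a = n/2 - O(\sqrt n)$, standard concentration of $\mathrm{Bin}(n,1/2)$ (as already used in the proof of Lemma~\ref{lem:Ti_for_k_o_of_n}) gives $\binom{n}{\le j} = \Theta(2^n)$ uniformly across the summation range, with the upper bound being trivial. It thus remains to show
\[
S := \sum_{j=a}^{n-k-1} \binom{n-1}{j}^{-1} = \Theta\!\big(\binom{n}{k}^{-1}\big).
\]
The lower bound $S \ge \binom{n-1}{n-k-1}^{-1} = \binom{n-1}{k}^{-1} = \Theta(\binom{n}{k}^{-1})$ is immediate from retaining only the last term of the sum.

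For the matching upper bound, I would reindex by $s = n-k-1-j$ and observe that the ratio of consecutive summands is $(k+s+1)/(n-k-s-1)$, which, for $k = (1/2-\varepsilon)n$ and $s \le \varepsilon n/2$, is at most a constant $\rho = \rho(\varepsilon) < 1$. Hence the initial segment of the reindexed sum is dominated by a geometric series contributing $O(\binom{n-1}{k}^{-1})$, and the remaining at most $\varepsilon n$ terms are each of size at most $\rho^{\varepsilon n/2} \binom{n-1}{k}^{-1}$, contributing only $o(\binom{n-1}{k}^{-1})$. Combining the two parts with Lemma~\ref{lem:expectation_is_Theta_expectation_Ta} then yields $\expectation[T] = \Theta(2^n \binom{n}{k}^{-1})$, and the ``exponential in $n$'' conclusion is immediate from the Chernoff-type bound $\binom{n}{k}/2^n \le e^{-2\varepsilon^2 n}$. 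The one delicate step is ensuring that the geometric decay rate $\rho$ is a constant independent of $n$ (so the sum is dominated by its last term up to a constant, not merely a polynomial factor); this is exactly the point at which the hypothesis that $\varepsilon$ is a positive constant strictly below $1/2$ is used.
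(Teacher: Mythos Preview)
Your proposal is correct and follows essentially the same route as the paper: reduce to $\expectation[T(a)]$ via Lemma~\ref{lem:expectation_is_Theta_expectation_Ta}, use Lemma~\ref{lem:binom_expectation_ineq} to get $a=n/2-O(\sqrt n)$ so that $\binom{n}{\le j}=\Theta(2^n)$ throughout the sum, and then show the remaining sum of reciprocal binomials is $\Theta(\binom{n}{k}^{-1})$ by a geometric-decay argument. The paper packages that last step as a separate Lemma~\ref{lem:recip_binom_coeff_eps_n}, which first uses the symmetry $\binom{n-1}{j}=\binom{n-1}{n-1-j}$ to rewrite the sum as one starting at index~$k$ and then argues that the leading $\eps n/2$ terms shrink geometrically; your reindexing by $s=n-k-1-j$ achieves the same thing without the symmetry flip, and your observation that the ratio $(k+s+1)/(n-k-s-1)$ is bounded by a constant $\rho<1$ for $s\le \eps n/2$ is exactly the paper's decay constant~$\lambda$. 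The only cosmetic difference is that you invoke a Chernoff-type tail bound for the ``exponential in $n$'' conclusion whereas the paper appeals to Stirling's formula.
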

		
		\begin{proof}
			Let $a = \lfloor \expectation[X \mid X \le n-k] \rfloor$. By Lemma~\ref{lem:expectation_is_Theta_expectation_Ta}, we have $\expectation[T] = \Theta(\expectation[T(a)])$. Theorem~\ref{thm:exact_expectation} gives
			\[
			\expectation[T(a)] = \sum_{j=a}^{n-k-1} \binom{n}{\leq j} \Big/ \binom{n-1}{j}.
			\]
			Also, Lemma~\ref{lem:binom_expectation_ineq} implies $a \geq n/2 - \sqrt{\frac{n}{2\pi}} - o(\sqrt{n})$, and so $j \geq a$ implies that $\binom{n}{\leq j} = \Theta(2^n)$. Consequently,
			\[
			\expectation[T(a)] = \Theta(2^n) \sum_{j=a}^{n-k-1} \binom{n-1}{j}^{-1},
			\]
			and the following Lemma~\ref{lem:recip_binom_coeff_eps_n} gives the main claim. That $2^n \binom{n}{k}^{-1} = \exp(\Theta(n))$ follows from estimating the binomial coefficient via Stirling's approximation $n! = \Theta( (\frac ne)^n \sqrt{n})$.
		\end{proof}
		
		\begin{lemma}
			\label{lem:recip_binom_coeff_eps_n}
			Let $k = n/2 - \eps n$, where $0 < \eps < 1/2$. Let $k \le g(n) \leq n-k-1$ and let $\displaystyle{S = \sum_{j = g(n)}^{n - k - 1} \binom{n-1}{j}^{-1}}$. Then $\displaystyle{S = \Theta\left(\binom{n}{k}^{-1}\right)}$.
		\end{lemma}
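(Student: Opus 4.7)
The plan is to show that the sum $S$ is, up to a constant factor, equal to its largest single term, namely the last one $\binom{n-1}{n-k-1}^{-1}$. By the symmetry $\binom{n-1}{a} = \binom{n-1}{n-1-a}$, this term equals $\binom{n-1}{k}^{-1}$; and since $\binom{n-1}{k} = \binom{n}{k}\cdot (n-k)/n$ with $(n-k)/n = \tfrac{1}{2}+\eps$ a positive constant, we have $\binom{n-1}{k}^{-1} = \Theta(\binom{n}{k}^{-1})$. The lower bound $S = \Omega(\binom{n}{k}^{-1})$ is then immediate from retaining only this last term of the sum.

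For the matching upper bound, I would reindex via $s = n-k-1-j$ and use $\binom{n-1}{n-k-1-s} = \binom{n-1}{k+s}$ together with the assumption $g(n) \ge k$ to obtain
\[
S \;=\; \sum_{s=0}^{n-k-1-g(n)} \binom{n-1}{k+s}^{-1} \;\le\; \sum_{s=0}^{n-2k-1} \binom{n-1}{k+s}^{-1}.
\]
The summand $\binom{n-1}{k+s}^{-1}$, viewed as a function of $s \in [0, n-2k-1]$, is U-shaped: it equals $\binom{n-1}{k}^{-1}$ at both endpoints and has an exponentially small minimum near $s \approx \eps n$. This non-monotonicity is the main obstacle, since it prevents a uniform geometric-decay argument from a single endpoint. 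I would sidestep it by invoking the symmetry $\binom{n-1}{k+s} = \binom{n-1}{k+(n-2k-1-s)}$ to upper bound the full sum by twice the partial sum over $s \in [0, \lceil (n-2k-1)/2\rceil]$.

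On this restricted range the sequence is monotonically nonincreasing, and the ratio of consecutive terms, $(k+s+1)/(n-k-s-1)$, is increasing in~$s$. For $s \le \eps n/2$ these ratios stay at most $r' := (1-\eps)/(1+\eps) < 1$, so the contribution from $s \in [0, \eps n/2]$ is bounded by the geometric sum $\binom{n-1}{k}^{-1} \sum_{s \ge 0}(r')^s = O(\binom{n-1}{k}^{-1})$. Each remaining term for $s > \eps n/2$ is at most $(r')^{\eps n/2} \binom{n-1}{k}^{-1} = \exp(-\Omega(n)) \binom{n-1}{k}^{-1}$, so the remaining $O(n)$ terms contribute only $o(\binom{n-1}{k}^{-1})$. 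Combining, $S = O(\binom{n-1}{k}^{-1}) = O(\binom{n}{k}^{-1})$, which together with the lower bound yields $S = \Theta(\binom{n}{k}^{-1})$.
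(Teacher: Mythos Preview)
Your proof is correct and follows essentially the same approach as the paper's: both take the last term for the lower bound, extend the sum to the full symmetric range $j \in [k,\,n-k-1]$, exploit the identity $\binom{n-1}{j} = \binom{n-1}{n-1-j}$ to reduce to a monotone half, and then split that half into a near part bounded by a geometric series and a far part where every term is already exponentially smaller than $\binom{n-1}{k}^{-1}$. The only cosmetic difference is that you make the decay ratio explicit as $r' = (1-\eps)/(1+\eps)$ (which, strictly speaking, only holds up to an additive $o(1)$, so any fixed constant in $\bigl((1-\eps)/(1+\eps),\,1\bigr)$ would be cleaner), whereas the paper simply writes ``some factor $\lambda \in (0,1)$''.
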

		
		\begin{proof}
			For the lower bound, we note that $S \ge \binom{n-1}{n-k-1}^{-1} = \binom{n-1}{k}^{-1} \ge \binom{n}{k}^{-1}$.
			
			The proof of the upper bound boils down to the fact that the first few terms (say $\eps n /2$ terms) of $\displaystyle{\binom{n-1}{k}^{-1}, \binom{n-1}{k+1}^{-1}, \binom{n-1}{k+2}^{-1},\ldots}$ decrease exponentially fast. Writing $M:=\lfloor \frac{n-1}{2} \rfloor$,  we first note that 
			\[
			S \le \sum_{j = k}^{n - k - 1} \binom{n-1}{j}^{-1} \le 2 \sum_{j = n - M -1}^{n - k - 1} \binom{n-1}{j}^{-1},
			\]
			hence it suffices to regard
			\[
			S' := \sum_{j = n - M - 1}^{n - k - 1} \binom{n-1}{j}^{-1}.
			\]
			Again from the symmetry of the binomial coefficients, we obtain
			\[
			\begin{aligned}
			S' &= \binom{n-1}{n - M - 1}^{-1} + \binom{n-1}{n - M}^{-1} + \binom{n-1}{n - M + 1}^{-1} + \cdots + \binom{n-1}{n - k - 1}^{-1} \\
			&= \binom{n-1}{k}^{-1} + \binom{n-1}{k+1}^{-1} + \binom{n-1}{k+2}^{-1} + \cdots + \binom{n-1}{M}^{-1}.
			\end{aligned}
			\]
			Because $k = n/2 - \eps n$, the first $\eps n/2$ terms in this sum decrease exponentially fast by some factor $\lambda \in (0, 1)$. Let $a = \binom{n-1}{k}^{-1}$. We have,
			\[
			\begin{aligned}
			S' &\leq \sum_{m=0}^{\eps n/2} a \lambda^m + \sum_{j = \eps n/2+1}^{n/2} \binom{n-1}{j}^{-1}\\
			&\leq a\frac{1}{1 - \lambda} + \lambda^{\eps n/2} a \cdot n/2\\
			&= O(a).
			\end{aligned}
			\]
			Noting that $\binom{n-1}{k} = \Theta(\binom{n}{k})$ finishes the proof.
		\end{proof}
		
		\subsection{$k$ Close to $n/2$}
		
		We now turn to the case that $n/2 - k = o(n)$. With $2^{(1-o(1))n}$ global optima, this case is not overly interesting, so we mainly present it to correct an incorrect statement in~\cite{DoerrK23majority}.
		
		\begin{theorem}
			\label{thm:bounds_for_k_near_half_n}
			Let $T$ be the runtime on $\needle_{k}$. 
			
			If $k = n/2 - g(n)$, where $g(n) = o(n)$ and $g(n) = \omega(\sqrt{n})$, then 
			\[
			\expectation[T] = O\left(g(n)2^n\binom{n}{k}^{-1}\right) \mbox{ and }
			\expectation[T] = \Omega\left(2^n\binom{n}{k}^{-1}\right).
			\]
			
			If $k = n/2 - O(\sqrt{n})$, then ${\expectation[T] = \Theta(n) = \Theta\left(\sqrt{n}2^n \binom{n}{k}^{-1}\right)}$.   
			
			All bounds in this theorem are sub-exponential in $n$.
		\end{theorem}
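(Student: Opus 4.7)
The plan is to imitate the proof of Theorem~\ref{thm:linear_k}: first apply Lemma~\ref{lem:expectation_is_Theta_expectation_Ta} to obtain $\expectation[T] = \Theta(\expectation[T(a)])$ for $a = \lfloor \expectation[X \mid X \le n-k]\rfloor$, then use Theorem~\ref{thm:exact_expectation} to expand $\expectation[T(a)]$ as a sum. The bound $a \ge n/2 - O(\sqrt{n})$ from Lemma~\ref{lem:binom_expectation_ineq}, combined with standard binomial concentration around $n/2$, gives $\binom{n}{\leq j} = \Theta(2^n)$ for every $j$ in the summation range, so the task reduces to estimating
\[
\expectation[T(a)] = \Theta(2^n) \sum_{j = a}^{n-k-1} \binom{n-1}{j}^{-1}.
\]

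For the first case ($g(n) = \omega(\sqrt n)$ and $g(n) = o(n)$), the sum has $n - k - a \le g(n) + O(\sqrt n) = O(g(n))$ terms. Since $|a - n/2| = O(\sqrt{n}) = o(g(n))$ is dominated by $|n - k - 1 - n/2| = g(n) - 1$, the index $j = n - k - 1$ lies farther from $(n-1)/2$ than $j = a$ does, so the maximum of $\binom{n-1}{j}^{-1}$ on the range is attained at $j = n - k - 1$ and equals $\binom{n-1}{k}^{-1} = \Theta(\binom{n}{k}^{-1})$. This yields the upper bound $\expectation[T] = O(g(n)\, 2^n \binom{n}{k}^{-1})$; the matching lower bound $\Omega(2^n \binom{n}{k}^{-1})$ comes from retaining only the last summand, exactly as in Lemma~\ref{lem:recip_binom_coeff_eps_n}.

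For the second case ($k = n/2 - O(\sqrt n)$), every $j$ in the range satisfies $j = n/2 + O(\sqrt n)$, so Stirling's formula gives $\binom{n-1}{j} = \Theta(2^n/\sqrt n)$ uniformly and every summand equals $\Theta(\sqrt n/2^n)$. The crux is to show that $n - k - a = \Theta(\sqrt n)$ and not merely $O(\sqrt n)$: writing $c(n) := n/2 - k$, for $c(n) = o(\sqrt n)$ one refines the computation in the proof of Lemma~\ref{lem:binom_expectation_ineq} to obtain $n/2 - a = (1-o(1))\sqrt{n/(2\pi)}$, while for $c(n) = \Theta(\sqrt n)$ the trivial inequality $n - k - a \ge c(n)$ already suffices. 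The sum then equals $\Theta(n/2^n)$, so $\expectation[T] = \Theta(n)$. Finally, $\binom{n}{k} = \Theta(2^n/\sqrt n)$ for $k$ within $O(\sqrt n)$ of $n/2$ gives the identity $\Theta(n) = \Theta(\sqrt n \cdot 2^n \binom{n}{k}^{-1})$, and sub-exponentiality in both cases follows from $\log(2^n \binom{n}{k}^{-1}) = O(g(n)^2/n) + O(\log n) = o(n)$ via the standard entropy bound on the binomial coefficient. The main obstacle is precisely the uniform $\Theta(\sqrt n)$ count of summands in the second case, since Lemma~\ref{lem:binom_expectation_ineq} as stated only supplies a one-sided inequality on $a$, forcing one to extract matching upper and lower control on $n/2 - a$ from the internal calculation in its proof.
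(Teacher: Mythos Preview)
Your proposal is correct and follows essentially the same route as the paper's own proof: reduce to $\expectation[T(a)]$ via Lemma~\ref{lem:expectation_is_Theta_expectation_Ta}, expand with Theorem~\ref{thm:exact_expectation}, replace $\binom{n}{\le j}$ by $\Theta(2^n)$, and then bound the remaining sum of reciprocal binomial coefficients term-by-term. In fact you are \emph{more} careful than the paper in one place: in the case $k = n/2 - O(\sqrt{n})$ the paper simply writes $(n/2 + O(\sqrt n) - a) = \Theta(\sqrt n)$ without justification, whereas you correctly observe that Lemma~\ref{lem:binom_expectation_ineq} only gives $a \ge n/2 - O(\sqrt n)$, and you supply the missing lower bound on $n-k-a$ by splitting into the subcases $c(n) = \Theta(\sqrt n)$ (where $a \le n/2$ suffices) and $c(n) = o(\sqrt n)$ (where the exact asymptotic $n/2 - a = (1+o(1))\sqrt{n/(2\pi)}$ from the internal computation of Lemma~\ref{lem:binom_expectation_ineq} is needed).
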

		
		Before proving this result, the incorrect comment in \cite{DoerrK23majority} that we noticed was made after Theorem~7 in that paper. Namely, it said that if $r=\Theta(1)$, which in our notation is $k = n/2 - \Theta(1)$, then the expected runtime is constant. This claim can easily be seen to be false because with constant probability the initial random solution is at least $\sqrt{n}$ away from the target region, and then you would need at least that long to reach the target. We observe that Theorem~\ref{thm:bounds_for_k_near_half_n} covers the case  $k = n/2 - \Theta(1)$ and shows a tight bound of $\Theta(n)$ for the expected runtime in this case.
		
		We note that Theorem~\ref{thm:bounds_for_k_near_half_n} does not give tight bounds in the case that $n/2-k \in \omega(\sqrt n) \cap o(n)$. We are confident that our analysis can be made more tight without much additional work, but we found the case $n/2-k = o(n)$ not interesting enough to justify the effort.
		
		\begin{proof}[Proof of Theorem~\ref{thm:bounds_for_k_near_half_n}]
			Let $a$ be as in Lemma~\ref{lem:theta_bound_on_expectation_T}. By Lemma~\ref{lem:expectation_is_Theta_expectation_Ta}, we have $\expectation[T] = \Theta(\expectation[T(a)])$.
			First, assume $k = n/2 - O(\sqrt{n})$. By Theorem~\ref{thm:exact_expectation}, we have
			\[
			\expectation[T(a)] = \sum_{j = a}^{n-k-1} \binom{n}{\leq j} \Big/ \binom{n-1}{j}.
			\]
			For all $j \geq a$ we have $\binom{n}{\leq j} = \Theta(2^n)$, and for all $j$ in the sum, $\binom{n-1}{j} = \Theta(2^n/\sqrt{n})$. Therefore, $\expectation[T] = \Theta(2^n)\Theta(\sqrt{n}2^{-n})(n/2 +O(\sqrt{n}) - a) = \Theta(\sqrt{n})\Theta(\sqrt{n}) = \Theta(n)$.
			
			Next, assume $k = n/2 - g(n)$.
			\[
			\expectation[T(a)] = \sum_{j = a}^{n/2 + g(n)} \binom{n}{\leq j} \Big/ \binom{n-1}{j}.
			\]
			As before, for all $j \geq a$ we have $\binom{n}{\leq j} = \Theta(2^n)$. Therefore, $T(a) \leq 2^n \sum_{j=a}^{n/2 + g(n)} \binom{n-1}{j}^{-1} = O\left(g(n)2^n\binom{n-1}{k}^{-1}\right) = O\left(g(n)2^n\binom{n}{k}^{-1}\right)$.
			Next, for the lower bound, $\expectation[T(a)] \geq \binom{n}{\leq n/2 + g(n)}\binom{n-1}{k}^{-1} = \Omega\left(2^n \binom{n}{k}^{-1}\right)$.
			
			From Stirling's approximation, we obtain that $\binom nk = 2^{(1-o(1)) n}$ for $k = n/2 - o(n)$, which shows that all bounds in this result are $\exp(o(n))$. 
		\end{proof}
		
		\subsection{$k$ Larger than $n/2$}
		
		For reasons of completeness, we also regard the case that $k$ is larger than $n/2$. We focus on a range in which the expected runtime is $o(1)$. While we are sure that similar arguments as found in this paper could be used to give precise bounds for other ranges of $k$, we did not do so because it did not seem important enough to be worth the effort. We merely note in passing that we think that for $k = n/2 + O(\sqrt{n})$ 
		the expected runtime is $\Theta(n)$.
		
		\begin{theorem}
			\label{thm:large_enough_k}
			Let $k \ge n/2 + \sqrt{n \log n}$, and let $T$ be the runtime on $\needle_{k}$. Then
			\[
			\expectation[T] = o(1).
			\]
		\end{theorem}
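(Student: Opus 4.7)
The plan is to split the analysis of $\expectation[T]$ according to the initial search point: if the initial random bit string already has at least $n-k$ ones, then it is already optimal and contributes nothing, while if it has fewer than $n-k$ ones, we will bound its contribution very generously. Writing $X$ for the number of ones of the initial solution, and using that $\expectation[T(i)]$ is non-increasing in $i$ (clear from the explicit sum in Theorem~\ref{thm:exact_expectation}, or directly from the fact that $T$ counts the hitting time of a larger target), we get
\[
\expectation[T] = \sum_{i=0}^{n-k-1} \Pr[X = i]\, \expectation[T(i)] \le \Pr[X \le n-k-1] \cdot \expectation[T(0)].
\]
It therefore suffices to bound these two factors separately.

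For the probability factor, since $k \ge n/2 + \sqrt{n\log n}$, we have $n-k-1 \le n/2 - \sqrt{n\log n}$, and the standard additive Chernoff bound for $X \sim \mathrm{Bin}(n, 1/2)$ gives
\[
\Pr[X \le n-k-1] \le \exp\!\left(-2 (\sqrt{n\log n})^2 / n\right) = n^{-2}.
\]

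For the runtime factor, I would control $\expectation[T(0)]$ via the additive drift theorem of He and Yao (as already used in the proof of Lemma~\ref{lem:Ti_for_k_o_of_n}). With $I_t$ denoting the number of ones of the current search point, for every state $i \in [0..n-k-1]$, which in particular satisfies $i < n/2$, the drift is
\[
\expectation[I_{t+1} - I_t \mid I_t = i] = \frac{n-i}{n} - \frac{i}{n} = \frac{n - 2i}{n} \ge \frac{2k - n + 2}{n} \ge \frac{2\sqrt{n\log n}}{n}.
\]
Applying additive drift to the potential $(n-k) - I_t$, which starts at $n-k \le n/2$, gives
\[
\expectation[T(0)] \le \frac{n/2}{2\sqrt{n\log n}/n} = O\!\left( n^{3/2} / \sqrt{\log n} \right).
\]

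Combining the two estimates yields $\expectation[T] \le n^{-2} \cdot O(n^{3/2}/\sqrt{\log n}) = O(1/\sqrt{n\log n}) = o(1)$, as desired. I do not foresee a real obstacle here; the only mildly delicate point is ensuring the drift lower bound holds uniformly over all $i$ the walk can visit before hitting the target, which is automatic because the drift $(n-2i)/n$ is minimized (among $i \le n-k-1$) at $i = n-k-1$, and this minimum is still $\Omega(\sqrt{\log n /n})$.
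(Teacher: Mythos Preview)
Your proof is correct and follows the same overall decomposition as the paper's: bound $\expectation[T] \le \Pr[X \le n-k-1]\cdot \expectation[T(0)]$, control the probability factor via an additive Chernoff bound to get $n^{-2}$, and then bound $\expectation[T(0)]$. The paper reaches the same inequality through Lemma~\ref{lem:theta_bound_on_expectation_T} (writing $\expectation[T] \le w\,\expectation[T(a)] \le w\,\expectation[T(0)]$), so the splitting step is essentially identical.

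The one genuine difference is in how $\expectation[T(0)]$ is estimated. The paper proves a separate Lemma~\ref{lem:t0}, using the exact expression of Theorem~\ref{thm:exact_expectation} together with the monotonicity of $\expectation[T_j^+]$ (Lemma~\ref{lem:Tip_is_increasing}) to obtain $\expectation[T(0)] = O(n^{3/2})$, a bound valid for any $k \ge n/2$. You instead apply additive drift directly, exploiting that for every $i \le n-k-1$ the drift toward the target is at least $(2k-n+2)/n \ge 2\sqrt{n\log n}/n$, which gives the slightly sharper $\expectation[T(0)] = O(n^{3/2}/\sqrt{\log n})$. Your argument is more self-contained (it does not need the exact-runtime formula) and squeezes a little more out of the hypothesis $k \ge n/2 + \sqrt{n\log n}$; the paper's argument has the advantage of reusing its central machinery and of giving a bound on $\expectation[T(0)]$ that is uniform over all $k \ge n/2$.
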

		\begin{proof}
			We use the upper bound in Lemma~\ref{lem:theta_bound_on_expectation_T}. 
			As in that lemma, let $X$ be a binomial random variable with parameters $n$ and $1/2$, and let $w = \Pr[X \le n-k]$. By the additive Chernoff bound, see, e.g., Theorem~1.10.7 in \cite{Doerr20bookchapter}, with $\lambda = \sqrt{n \ln n}$, we have
			\[
			w 
			\le \Pr\left[X \leq \expectation[X] - \lambda \right] 
			\leq  \exp\left(\frac{-2 \lambda^2}{n}\right) 
			= \frac 1 {n^2}.
			\]
			Let $a = \expectation[X \mid X \le n-k]$ be as in Lemma~\ref{lem:theta_bound_on_expectation_T}. Note that by definition of~$a$, we have $a \ge 0$. Hence by definition of $T(\cdot)$, we have $\expectation[T(a)] \le \expectation[T(0)]$, and thus, by Lemma~\ref{lem:theta_bound_on_expectation_T},
			\[
			\expectation[T] \leq w \expectation[T(a)] \le w \expectation[T(0)] \le \frac 1 {n^2} O(n^{3/2}) = o(1),
			\]
			where the estimate for $\expectation[T(0)]$ is from Lemma~\ref{lem:t0}, formulated below as a separate statement to keep this proof concise. 
		\end{proof}
		
		\begin{lemma}\label{lem:t0}
			Let $k \ge n/2$. Then $\expectation[T(0)] = O(n^{3/2})$.
		\end{lemma}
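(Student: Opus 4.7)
The plan is to apply the exact formula from Theorem~\ref{thm:exact_expectation} together with Lemma~\ref{lem:droste_simplified}, which gives
\[
\expectation[T(0)] = \sum_{j=0}^{n-k-1} \binom{n}{\le j} \Big/ \binom{n-1}{j}.
\]
Since $k \ge n/2$ forces $n-k-1 < n/2$, the sum has at most $n/2$ terms, all with $j \in [0, n/2-1]$. It therefore suffices to show that each summand is $O(\sqrt n)$; multiplying by the number of terms then yields the claimed bound $O(n^{3/2})$.

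To bound the individual summand, I would first substitute $\binom{n-1}{j} = \binom{n}{j}(n-j)/n$, which for $j \le n/2-1$ gives $\binom{n-1}{j} \ge \tfrac12 \binom{n}{j}$ and reduces the task to showing that $\binom{n}{\le j}/\binom{n}{j} = O(\sqrt n)$ uniformly on $j \in [0, n/2-1]$. I would split this range at $j_0 = \lfloor n/2 - \sqrt n\rfloor$. On $j \le j_0$, the ratio of consecutive binomial coefficients $\binom{n}{i-1}/\binom{n}{i} = i/(n-i+1)$ is at most $j/(n-j+1) < 1$, so a geometric-series bound from the top yields
\[
\binom{n}{\le j}\Big/\binom{n}{j} \;\le\; \frac{n-j+1}{n-2j+1} \;=\; O(\sqrt n),
\]
the final estimate using $n-2j \ge 2\sqrt n$ on this sub-range. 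On $j \in (j_0, n/2-1]$, I would combine the trivial upper bound $\binom{n}{\le j} \le 2^n$ with the Stirling-based lower bound $\binom{n}{j} = \Omega(2^n/\sqrt n)$, which holds because $\binom{n}{j}$ stays within a constant factor of the central coefficient $\binom{n}{\lfloor n/2\rfloor} = \Theta(2^n/\sqrt n)$ across this short window of length $O(\sqrt n)$.

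Nothing in this plan is technically difficult; the step that requires the most (very mild) care is the lower bound $\binom{n}{j} = \Omega(2^n/\sqrt n)$ on the second sub-range, which is the standard consequence of Stirling's formula that $\binom{n}{n/2-a}/\binom{n}{\lfloor n/2\rfloor}$ is bounded below by a positive constant whenever $a = O(\sqrt n)$. With those two sub-range estimates and a final summation of at most $n/2$ terms, each $O(\sqrt n)$, the bound $\expectation[T(0)] = O(n^{3/2})$ follows immediately.
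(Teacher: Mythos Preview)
Your proof is correct and follows the same high-level outline as the paper: both start from the exact formula, note that there are at most $n/2$ summands, and show that each summand $\binom{n}{\le j}/\binom{n-1}{j}$ is $O(\sqrt n)$. The difference lies only in how the per-term bound is established. The paper invokes Lemma~\ref{lem:Tip_is_increasing} (monotonicity of $\expectation[T_j^+]$) to replace every term by the single largest one, $\expectation[T_{n/2-1}^+]$, and then evaluates that one ratio directly as $\Theta(\sqrt n)$. You instead bypass the monotonicity lemma and bound the ratio explicitly via a case split at $j_0=\lfloor n/2-\sqrt n\rfloor$, using a geometric-series argument below $j_0$ and a Stirling estimate above it. Both routes are short; the paper's is a little cleaner because the monotonicity lemma collapses your two cases into a single computation, while your argument has the minor advantage of being self-contained and not relying on Lemma~\ref{lem:Tip_is_increasing}.
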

		
		\begin{proof}
			By Theorem~\ref{thm:exact_expectation}, we have
			\[
			\expectation[T(0)] = \sum_{j=0}^{n-k-1} \binom{n}{\le j} \Big/ \binom{n-1}{j},
			\]
			where recall $\binom{n}{\le j} / \binom{n-1}{j} = \expectation[T_j^+]$ by Lemma~\ref{lem:droste_simplified}. 
			By Lemma~\ref{lem:Tip_is_increasing}, $\expectation[T_j^+]$ is increasing for $j \in [0..n]$. We estimate $\expectation[T_j^+] \le \expectation[T_{n/2-1}^+]$, irrespective of whether the state $n/2-1$ belongs to the global optimum or not, and compute 
			\[
			\expectation[T_{n/2-1}^+] 
			= \binom{n}{\le n/2-1} \Big/ \binom{n-1}{n/2-1} 
			\le 2^{n-1} \Big/ (2^{n-1} \Theta(n^{-1/2}))
			= \Theta(n^{1/2}).
			\]
			Consequently, $\expectation[T(0)] \le (n-k) \expectation[T_{n/2-1}^+] = O(n^{3/2})$.
		\end{proof}
		
		\section{Conclusion}
		
		In this work, we determined the precise expected runtime of the \emph{randomized local search} heuristic on the generalized \needle problem and gave easy-to-handle estimates for this runtime. Our work gives the first lower bounds for this problem and improves the upper bounds shown recently in~\cite{DoerrK23majority}. The key to these results is an elementary Markov chain approach, which greatly simplifies the complex drift analysis approach of~\cite{DoerrK23majority}. From the differences in the two works, we would conclude that using drift analysis in situations where there is no natural drift (as on the plateau of constant fitness of the \needle problem) is not an ideal approach. As long as other methods are available, we would recommend to reserve the use of drift analysis to proving upper bounds when there is a natural drift towards the target, or to proving lower bounds when there is drift away from the target.

		\bibliographystyle{alphaurl}

	}
\end{document}